\documentclass[letterpaper,11pt]{amsart}
\usepackage[utf8]{inputenc} 
\usepackage{amsmath}
\usepackage{amsthm}
\usepackage{hyperref}       
\usepackage{url}            
\usepackage{booktabs}       
\usepackage{amsfonts}       
\usepackage{nicefrac}       
\usepackage{microtype}      
\usepackage{xcolor}         
\usepackage{ bbold }
\usepackage{bbm,bm}
\usepackage{listings}
\lstdefinelanguage{pseudocode}{
	morekeywords={for, if, every, then, input}
}
\lstdefinestyle{pseudocode}{
	language=pseudocode,
	keywordstyle=\bfseries
}

\hypersetup{colorlinks = true, linkcolor =blue, anchorcolor = red, citecolor = blue, urlcolor = blue}

\usepackage[all,cmtip]{xy}
\xyoption{frame}

\usepackage{tikz}
\usetikzlibrary{matrix, positioning}
\usepackage{mathtools}
\usepackage{verbatim}

\usepackage{algorithmicx}
\usepackage{algorithm}
\usepackage{algpseudocode}

\usepackage{enumitem}

\usepackage[portrait,letterpaper,margin=1.05in]{geometry}

\newtheorem{theorem}{Theorem}
\newtheorem{definition}[theorem]{Definition}
\newtheorem{lemma}[theorem]{Lemma}

\newtheorem{corollary}[theorem]{Corollary}

\newenvironment{proofof}[1]{ {\noindent \em Proof of #1.}\/}{\hfill\qed\bigskip}

\DeclareMathOperator{\diag}{diag}
\def\rank{\operatorname{rank}}

\newcommand{\rset}{\mathbb R}

\newcommand{\ep}{{\varepsilon}}
\newcommand{\pimin}{{\pi_{\min}}}

\let\R\Real




\def\set#1{\{ #1 \}}

\def\Set#1{\left\{ #1 \right\}}
\def\Abs#1{\left| #1 \right|}

\def\Norm#1{\left\| #1 \right\|}
\def\Paren#1{\left( #1 \right)}

\makeatletter
\def\Bigbar#1{\mathrel{\left|\vphantom{#1}\right.\n@space}}

\makeatother

\newcommand{\hset}{\mathbb H}

\newcommand{\suppress}[1]{}

\def\eps{\varepsilon}

\def\given{\mid}

\newcommand{\be}{\begin{equation}}
		\newcommand{\ee}{\end{equation}}
\newcommand{\bea}{\begin{eqnarray}}
		\newcommand{\eea}{\end{eqnarray}}
\newcommand{\bean}{\begin{eqnarray*}}
		\newcommand{\eean}{\end{eqnarray*}}

\def\DD{\mathcal{D}}

\def\tpose{\top}

\newcommand{\poly}{\mbox{poly}}

\def\tpose{\mathsf{T}}

\def\mm{{\mathbf{m}}}

\def\AA{{\mathbf{A}}}

\def\GG{{\mathbf{G}}}

\def\One{{\mathbb{1}}}
\def\bsig{{\bm{\sigma}}}

\DeclareMathOperator{\mom}{g}
\DeclareMathOperator{\emom}{{\tilde{g}}}

\def\Talg{{\mathcal{T}}}

\def\E{\operatorname{\mathbb{E}}}

\newcommand{\CC}{\mathbf{C}}
\newcommand{\UU}{\mathbf{U}}
\newcommand{\VV}{\mathbf{V}}
\newcommand{\WW}{\mathbf{W}}
\newcommand{\ZZ}{\mathbf{Z}}
\newcommand{\XX}{\mathbf{X}}
\newcommand{\YY}{\mathbf{Y}}
\newcommand{\eCC}{{\tilde{\mathbf{C}}}}

\newcommand{\hCC}{{\hat{\mathbf{C}}}}

\newcommand{\hUU}{\hat{\mathbf{U}}}
\newcommand{\hVV}{\hat{\mathbf{V}}}
\newcommand{\hWW}{\hat{\mathbf{W}}}
\newcommand{\hZZ}{\hat{\mathbf{Z}}}
\newcommand{\hSS}{\hat{\mathbf{S}}}
\newcommand{\hTT}{\hat{\mathbf{T}}}

\def\bSigma{\mathbf{\Sigma}}
\def\bDelta{\mathbf{\Delta}}

\DeclareMathOperator{\Vandermonde}{Vdm}

\def\eps{\varepsilon}
\def\given{\mid}

\newcommand{\nn}{{\mathbf{n}}}
\newcommand{\gb}{{\mathbf{g}}}
\newcommand{\model}{\mathrm{model}}
\newcommand{\stat}{\mathrm{stat}}
\newcommand{\g}{\gamma}
\newcommand{\ds}{{\mathbf{d}}}

\begin{document}

\title[Identification of Mixtures of Discrete Product Distributions]{Identification of Mixtures of Discrete Product Distributions in Near-Optimal Sample and Time Complexity}

\author{Spencer L. Gordon} \address{Engineering and Applied Science, California Institute of Technology, Pasadena CA 91125, USA. {\tt slgordon@caltech.edu}}
\author{Erik Jahn} \address{Engineering and Applied Science, California Institute of Technology, Pasadena CA 91125, USA. {\tt ejahn@caltech.edu}}
\author{Bijan Mazaheri}\address{Engineering and Applied Science, California Institute of Technology, Pasadena CA 91125, USA. {\tt bmazaher@caltech.edu}}
\author{Yuval Rabani}\address{The Rachel and Selim Benin School of Computer Science and Engineering, The Hebrew University of Jerusalem, Jerusalem 9190416, Israel. {\tt yrabani@cs.huji.ac.il}}
\author{Leonard J. Schulman}\address{Engineering and Applied Science, California Institute of Technology, Pasadena CA 91125, USA. {\tt schulman@caltech.edu}} 
\thanks{Research supported by NSF CCF-1909972 and CCF-2321079, by ISF grants 3565-21 and 389-22, and by BSF grant 2023607.}

\maketitle

\begin{abstract} 
We consider the problem of \emph{identifying,} from statistics, a 
distribution of discrete random variables $X_1,\ldots,X_n$ that is
 a mixture of $k$ product distributions. The best previous sample complexity for $n \in O(k)$ was $(1/\zeta)^{O(k^2 \log k)}$ (under a mild separation assumption parameterized by $\zeta$). The best known lower bound was $\exp(\Omega(k))$.
 
 It is known that $n\geq 2k-1$ is necessary and sufficient for identification.
We show, for any $n\geq 2k-1$, how to achieve sample complexity and run-time complexity $(1/\zeta)^{O(k)}$. 
We also extend the known lower bound of $e^{\Omega(k)}$ to match our upper bound across a broad range of $\zeta$.

Our results are obtained by combining (a) a classic method for robust tensor decomposition, (b) a novel way of bounding the condition number of key matrices called Hadamard extensions, by studying their action only on flattened rank-1 tensors.
\end{abstract}


\section{Introduction} \label{sec:intro} 

\subsection{The problem and our results.}
This paper resolves the sample and runtime complexity of identification of mixtures of product
distributions, a problem introduced almost thirty years ago in~\cite{KMRRSS94}, and further
studied in~\cite{CGG01,FM99,HallZ03,FOS08,CR08,tahmasebi2018identifiability,ChenMoitra19,gordon2021source}.
In this problem, an observer collects samples from the distribution over $n$ binary (or otherwise
drawn from a small finite set) random variables $X_1, X_2, \dots, X_n$. The samples are collected
from a mixture of $k$ distinct sub-populations. Conditional on the sub-population $U\in\{1,2,\dots,k\}$
from which a sample is drawn, the random variables are independent. The sub-population $U$ from 
which a sample is collected is a random variable chosen, for each sample of $n$ bits, independently 
according to the frequencies of the sub-populations in the entire population. However, the observer 
does not know these frequencies, and does not get an indication of the sub-population from which a 
sample was drawn; the observer only sees the values of the $n$ observable random bits. The goal
is to reconstruct the probabilistic model that generated the collected samples, namely, the frequencies
of the sub-populations and the conditional product distributions on $(X_1,X_2,\dots,X_n)\in\{0,1\}^n$.

Most of the above literature discusses the problem of {\em learning} the model. That is, the goal is to
produce some model with similar statistics (as measured, for instance, by KL-divergence) on the 
observables as the model that generated the samples. This does not necessarily guarantee similarity
in the parameter space of the models. In this paper, we focus on the stricter goal of 
{\em identifying} the model. That is, our goal is to produce a model whose parameters are sufficiently 
close to the true underlying model to generate also similar statistics on the observables. It is
known that identification is not always possible, as there exist some distributions on the observables that can be generated by more than one model. 
However, a mild condition of separability, namely that on each observable the
distribution is different among the sub-populations, guarantees identifiability 
information theoretically~\cite{tahmasebi2018identifiability}. We shall use $\zeta$ to denote the
minimum difference between sub-population distributions on an observable. This will be defined
precisely later. Note that identification also implies learning.

Our main result is an algorithm that identifies the parameters of any $\zeta$-separated mixture of 
product distributions using $(1/\zeta)^{O(k)}(1/\pimin)^{O(1)} (1/\eps)^2$ samples and runtime, 
up to additive error $\eps$. Here $\pimin$ is the minimum frequency of a sub-population. The
result holds if there are at least $2k-1$ \ $\zeta$-separated observables, which is known to be a 
necessary condition~\cite{Teicher61,Blischke64}. 
This result greatly improves upon the best previously known complexity for identification (and learning), that required 
$(1/\zeta)^{O(k^2 \log k)}(1/\pimin)^{O(\log k)}(1/\eps)^2$~\cite{gordon2021source} samples from
$3k-3$ \ $\zeta$-separated observables. Furthermore,
we show that the sample complexity
of identification (for constant $\eps$) is at least $(1/(k\zeta))^{\Omega(k)}$ (note that $\zeta\le\frac 1 k$ 
always). This generalizes the previously known lower bound of $\exp(-\Omega(k))$ that held only for
$\zeta = \Theta(1/k)$ ~\cite{RSS14}. Hence, our results are essentially optimal, both in terms of the number 
of $\zeta$-separated observables needed, and in terms of the sample and runtime complexity (excluding the 
case of $\zeta = \frac{1}{k^{1+o(1)}}$, where a small gap remains). 

For large $n$, $n = \omega(k)$, if
a subset of $2k-1$ 
$\zeta$-separated observables is known, 
then the runtime bounds pick up an additional factor of $n$ (to identify all the remaining observables). Otherwise, if 
the required subset exists but is not known, then the runtime picks up an additional factor of $n^{O(k)}$ (to enumerate over all possibilities). In both cases, the sample complexity increases by a factor of $\log n$. 

\subsection{Related work and motivation.}
The seminal work of~\cite{FOS08} solves the learning problem for general $k$ in sample and runtime
complexity $n^{O(k^3)}$. This was improved in~\cite{ChenMoitra19} to $k^{O(k^3)} n^{O(k^2)}$. (That 
paper also studied the problem of learning a ``mixture of subcubes'' of the hypercube, which is the special
case where each random bit $X_i$ is either fixed or uniformly distributed; for this case, they showed
sample and runtime complexity of $n^{O(\log k)}$.) The identification problem was first solved
in better complexity in~\cite{gordon2021source}, where it was shown how to identify a mixture of $k$ product distributions on $3k-3$ 
$\zeta$-separated variables with sample and runtime complexity of $(1/\zeta)^{O(k^2 \log k)}$. That paper
reduced the problem to a special case of identifying a mixture of $k$ distributions on independent and
{\em identically distributed} bits. The latter problem is solved using an elegant two-century-old method of 
Prony~\cite{Prony1795} coupled with a robustness analysis given in~\cite{gordon2020sparse}.

As pointed out in~\cite{FOS08} and elsewhere, the case of observables taking values in a
finite set reduces to the case of binary observables. Moreover, it is known~\cite{FanLi22}
that with polynomial overhead in the size of the range of the observables, the problem reduces 
to the case of identifying the conditional expectations of real-valued observables that are
independent conditional on the sub-population. Thus, we shall focus in this paper on this
real-valued version of the problem.

A concrete context in which identification of mixture distributions comes up is causal inference.
When data is drawn from multiple sources or sub-populations it is said to contain a {\em latent 
class}~\cite{allman2009identifiability}, which mirrors an unidentified mixture source. Standard 
procedures for identifying causal effects require considering the distributions within each latent 
class separately to control for potential confounding effects~\cite{Pea09}. Such an 
approach is generally impossible unless these within-source probability distributions can be
identified.

More broadly, the theory of causal inference relies at its core upon \emph{Bayesian networks} of 
random variables~\cite{pearl1985bayesian,SpGlS00,Pea09}; such a network imposes conditional independencies among random variables of the system. An important scenario is that several latent classes are subject to the same ``system mechanics'' (i.e., Bayesian network), but have different statistics. In this case, the problem of identifying the model is a far-reaching generalization of the problem of identifying mixtures of product distributions. There is a recent algorithm for the more general problem~\cite{GMRS23}; and that algorithm uses as an essential (and complexity-bottleneck) subroutine, any algorithm for identifying mixtures of product distributions. Thus the improvements of the present paper, carry over directly to that application.

\subsection{Our methods.}
We study the so-called Hadamard extensions that were also used to derive the bound in~\cite{gordon2021source}.
We give a new and much more powerful bound on the condition number of the Hadamard extensions. This bound 
alone would improve the sample complexity of the algorithm in~\cite{gordon2021source} to $(1/\zeta)^{O(k \log k)}$.
We gain further improvement as follows. Instead of reducing the problem to identifying mixtures of iid (synthetic) bits, we reduce 
the problem to a tensor decomposition problem, where the tensor components are guaranteed to be well-conditioned 
(which makes the decomposition unique). An algorithm for tensor decomposition in this setting was given thirty 
years ago in~\cite{Leurgans93} and later analyzed for robustness in~\cite{Goyal14, Bhaskara14}. (This algorithm 
can also be seen as a generalization of the matrix pencil method~\cite{HuaSarkar90}, applied to the iid case in~\cite{KKMMR18}.) Adapting the tensor decomposition algorithm to our setting and analyzing it using our 
new condition number bound, then yields the sample complexity of $(1/\zeta)^{O(k)}$.

\subsection{Comparison with the parametric case.} 
The literature on mixture models for parametric families (exponential distributions, Gaussians in $\rset$ or 
$\rset^d$, etc.) is even more extensive and older than for discrete mixture models. It is essential to realize 
a fundamental difference between the types of problems. In general, data is generated by (unseen) selection 
of a sub--population $j$ $(1\leq j \leq k)$, followed by (seen) sampling of $n$ independent samples 
\emph{from the $j$-th distribution}. In almost every parametric scenario (think e.g., of a mixture of $k$ 
Gaussians or exponential distributions on the line), \emph{$n=1$ is sufficient} in order to (in the limit of many repetitions) 
identify the model. This is fundamentally untrue in the non-parametric case; we have already mentioned 
that a lower bound of $n\geq 2k-1$ was shown in~\cite{RSS14}; this threshold for $n$ is called there the 
``aperture'' of the problem. To see, for starters, why the aperture must be larger than $n=1$, consider a 
single binary variable with $k=2$ equiprobable sources (i.e., $\Pr(U=0) = \Pr(U=1) =1/2$), one of which has 
$\Pr(X=1 \given U=0)=\frac{3}{4}$ and the other $\Pr(X=1 \given U=1)=\frac{1}{4}$.  If we see after each 
selection of a source only a single sample of $X$, it is impossible to distinguish between the above mixture
and a mixture in which $\Pr(X=1 \given U=0)= 1$ and $\Pr(X=1 \given U=1)=0$. 
With access to multiple independent samples 
from the \emph{same} source, however, we get empirical estimates of higher moments of the distribution, and at the critical aperture can 
identify the model.

\subsection{Organization.} 
Section~\ref{sec:prelim} formally states the identification problem for mixtures of product distributions and sets up the key mathematical objects needed for our work. Section~\ref{sec:alg} 
describes our algorithm and states our upper bounds on its sample complexity. Along with the algorithm pointers are provided to the main steps of the analysis 
in Sections~\ref{sec:cond},~\ref{apx: analysis}. In section~\ref{sec:lowerbounds} we prove lower bounds 
on the sample complexity of the identification problem. Finally, in section~\ref{sec:discussion} we discuss potential further directions of research on our topic.

\section{Results and preliminaries}\label{sec:prelim}

\subsection{The k-MixProd problem} 

Consider $n$ real, compactly supported random variables $X_1, \dots, X_n$ that are independent conditional on a latent random variable $U$ with range $[k] = \{1, \dots, k\}$. Given iid samples of the joint distribution of $(X_1, \dots, X_n)$, we want to identify the distribution of $U$, given by $\pi_j:=\Pr(U=j) \; (j\in[k])$, and the conditional expectations $\mm_{ij} = \E(X_i \mid U=j) \; (i \in [n], j \in [k])$. Hence, the model parameters for our problem are given by a vector $(\pi, \mm) \in \Delta^{k-1} \times \R^{n \times k}$, where $\Delta^{k-1}$ denotes the $(k-1)$-simplex. 

Set $X_S=\prod_{i \in S} X_{i}$, so $\E(X_S\mid U=j) = \prod_{i \in S} \mm_{ij}$. The mapping of the model to the statistics is then given by:
\begin{align} 
&\g_n: \Delta^{k-1} \times [0,1]^{n \times k} \to \rset^{2^{[n]}} \\
&\g_n(\pi, \mm)(S) = \E(X_S) =  \sum_{j=1}^k \pi_j \E(X_S\mid U=j) =
\sum_{j=1}^k\pi_j \prod_{i \in S} \mm_{ij}
 \label{model-to-stats} \end{align}
We drop the subscript $n$ and write $\g$ when $n$ is implied. The $k$-MixProd identification problem is to invert $\g_n$, i.e., to recover $(\pi_j)_{j \in [k]}$ and $(\mm_{ij})_{i\in [n],j\in[k]}$ (up to permuting the set $[k]$). This task is interesting in two versions, exact identification of $(\pi, \mm)$ from $\g_n(\pi, \mm)$ (i.e., from perfect statistics), and approximate identification of $(\pi, \mm)$ from noise-perturbed statistics $\emom$, i.e., from $\emom \in \rset^{\{0,1\}^n}$ that is close to $\g_n(\pi, \mm)$. To make the latter goal precise we need to specify metrics on the domain and range of $\g_n$. These are $L_\infty$ metrics, up to relabelings of the latent variable. ($S_k$ denotes the symmetric group on $k$ letters.)
\begin{align} d_\model ((\pi,\mm),(\pi',\mm'))
&:=\min_{\rho \in S_k} \max\{\max_j |\pi_j-\pi'_{\rho(j)}|, \max_{i,j} |\mm_{i,j}-\mm'_{i,\rho(j)}|\}\\
d_\stat(\mom,\mom')&:=\max_{S \subseteq [n]} |\mom(S)-\mom'(S)|.
\end{align} 

The mapping $\gamma_n$ is not everywhere injective, so 
the $k$-MixProd model identification problem is not always feasible. To guarantee identifiability we need to make the following assumptions:

\begin{enumerate}[label=(\alph*)]

\item ($\zeta$-separation) each variable $X_i$ is $\zeta$-\emph{separated}, i.e. $|\mm_{ij} - \mm_{ij'}| \geq \zeta$ for all $j\neq j' \in [k]$; \label{ass:1}
\item (non-degenerate prior) for each $j \in [k]$, we have $\pi_j \geq \pimin > 0$; \label{ass:2}
\item (sufficiently many observables) there are at least $n \geq 2k-1$ variables $X_i$. \label{ass:3}

\end{enumerate}

Let $\DD_{n,\zeta,\pimin}$ denote the space of the $k$-MixProd models with $n$ variables satisfying assumptions~\ref{ass:1} and~\ref{ass:2}. Formally:
\begin{align}
\DD_{n,\zeta,\pimin}=\{(\pi, \mm) \in \Delta^{k-1} \times [0,1]^{n \times k} \mid 
\min_j \pi_j \geq \pimin, \; \forall i \;
\min_{j\ne j'} |\mm_{ij} - \mm_{ij'}| \geq \zeta\}, \label{defn:DD}
\end{align}

Theorem~\ref{thm: main} shows (quite apart from its algorithmic content) that for $n \geq 2k-1$, if $(\pi,\mm)$ is a model in 
$\DD_{n, \zeta, \pimin}$, then any model whose statistics are close (in $d_\stat$) to those of $(\pi,\mm)$, must also be close to $(\pi,\mm)$ in $d_\model$.

We now introduce some mathematical concepts that will be needed for our work. 

\subsection{Hadamard extensions and related definitions}
Subsets of $[n]$ will typically be denoted by a capitalized variable in ordinary font: $S \subseteq [n]$. 
\begin{definition}
Given a matrix $\AA$ of any dimensions, let $\AA_{i*}$ denote the $i$'th row of $\AA$, and $\AA_{*j}$ the $j$'th column of $\AA$. Where clear from context we write $\AA_i$ instead of $\AA_{i*}$. 
For $S$ a set of rows, $\AA[S]$ denotes the submatrix of $\AA$ consisting of the rows in $S$.
\end{definition}

\begin{definition}[Hadamard product] \label{had-prod}
The \emph{Hadamard product} is the mapping $\odot: \rset^{[k]} \times \rset^{[k]}  \to \rset^{[k]}$ which,
 for row vectors $u=\Paren{u_1,\dotsc,u_k}$ and $v=\Paren{v_1,\dotsc,v_k}$, is given by
 $u \odot v \coloneqq (u_1v_1,\ldots,u_kv_k)$. Equivalently, using the notation $v_\odot = \diag(v)$,
the Hadamard product is $u \odot v = u \cdot v_\odot$. The identity element for the Hadamard product is the all-ones row vector $\One$. 
\end{definition}

\begin{definition}[Hadamard extension] For $\nn \in \rset^{n \times p}$,
    the \emph{Hadamard extension} of $\nn$, written
    $\hset(\nn)$, is the $2^{n} \times p$ matrix
    with rows $\hset(\nn)_S$ for all $S\subseteq [n]$, where, for $S=\{i_1,\ldots,i_\ell\}$,
    $\hset(\nn)_{S}= \nn_{i_1} \odot \cdots \odot \nn_{i_\ell}$; equivalently $\hset(\nn)_{S, j}=\prod_{i\in S} \nn_{ij}$. In particular $\hset(\nn)_\emptyset=\One$, and for all $i\in [n]$, $\hset(\nn)_{\{i\}} = \nn_i$.
\end{definition}
This construction first to our knowledge appeared (not under this name) in~\cite{ChenMoitra19}.

\begin{definition}
The singular values of a real matrix $\AA$ are denoted $\sigma_1(\AA)\geq \sigma_2(\AA) \geq \dotsm$. The $L_2 \to L_2$ operator norm is denoted $\Norm{\AA}$. The condition number of $\AA$ is denoted $\kappa(\AA) = \Norm{\AA} \cdot \Norm{\AA^{-1}}$. 
\end{definition}

\begin{definition}[Vandermonde matrix]\label{defn:Vdm} The Vandermonde matrix $\Vandermonde(m) \in \R^{k\times k}$ associated with a row vector $m \in \R^{k}$ has entries $\Vandermonde(m)_{ij} = (m_j)^i$ for $i\in \Set{0,1,\dotsc,k-1}$ and $j\in \Set{1,2,\dotsc,k}$. We also write $\Vandermonde(m,r)$ for the $r \times k$ matrix with entries  $\Vandermonde(m,r)_{ij} = (m_j)^i$.
\end{definition}

\subsection{Multilinear moments} \label{sec:multilin}
The data we obtain from our samples will be estimates of $\E[X_S] = \E[\prod_{i \in S}X_i]$ for all subsets $S\subseteq [n]$. We call these the \emph{multilinear moments} of the distribution, since they are multilinear in the rows $\mm_{i}$.
Observe that $\E[X_S] = \sum_{j} \pi_j \prod_{i\in S} \mm_{ij} =(\hset(\mm))_S \cdot \pi$, or equivalently, 
$\E[X_S] = (\mm_{i_1} \odot \mm_{i_2} \odot \dotsb \odot \mm_{i_s})\pi$ where $S = \set{i_1,i_2,\dotsc,i_s}$. Hence, the vector of statistics for a model $(\pi, \mm)$ is given by $\gamma(\pi, \mm) = \hset(\mm) \pi$. Observe that source identification is not possible if $\hset(\mm)$ has less than full column rank, i.e., if $\rank(\hset(\mm)) <k$, as then the mixing weights cannot be unique.

\begin{definition} \label{cts} Given disjoint sets $S, T \subseteq \{2, \dots, n\}$, define
\begin{align*}
    \CC_{ST} &= \hset(\mm[S]) \cdot \pi_{\odot} \cdot \hset(\mm[T])^\tpose,\\
    \CC_{ST, 1} &= \hset(\mm[S]) \cdot \pi_{\odot} \cdot \mm_{1 \odot} \cdot \hset(\mm[T])^\tpose
\end{align*} 
Note, for $A \subseteq S, B \subseteq T$, 
\begin{align*}
    (\CC_{ST})_{A, B} &= \g_n(\pi, \mm)(A \cup B), \\
    (\CC_{ST, 1})_{A, B} &= \g_n(\pi, \mm)(A \cup B \cup \{1\})  
\end{align*}

Consequently $\CC_{ST}$ and $\CC_{ST, 1}$ are observable, that is, every one of their entries is a statistic which the algorithm receives (a noisy version of) as input.
\end{definition}

\subsection{Tensor decomposition}
A matrix $\AA$ is rank $1$ if and only if it can be written as $\AA = \mathbf{uv}^\tpose$ for some vectors $\mathbf{u}, \mathbf{v}$. This concept can be generalized for tensors: 
\begin{definition}
    A $3$-way tensor $\Talg \in \R^{d_1 \times d_2 \times d_3}$ is said to be of rank $1$ if there exist vectors $u \in \R^{d_1}, v \in \R^{d_2}, z \in \R^{d_3}$ such that for all $i,j,k$:
    \[\Talg_{ijk} = u_i \cdot v_j \cdot z_k.\]
\end{definition}

Now, the rank of any tensor $\Talg$ can be defined as the minimum number of rank-1-tensors that sum up to $\Talg$. Equivalently, a tensor of rank $r$ has the following decomposition: 

\begin{definition}
    A $3$-way tensor $\Talg \in \R^{d_1 \times d_2 \times d_3}$ has a rank-$r$-decomposition if there exist matrices $\mathbf{U} \in \R^{d_1 \times r}, \mathbf{V} \in \R^{d_2 \times r}, \mathbf{Z} \in \R^{d_3 \times r}$ such that for all $i,j,k$:
    \[\Talg_{ijk} = \sum_{\ell = 1}^r \UU_{i\ell} \VV_{j\ell} \ZZ_{k_\ell}.\]
    We write $\Talg = [\UU, \VV, \ZZ]$ and call $\UU, \VV, \ZZ$ the factor matrices or tensor components of $\Talg$.
\end{definition}

In general, the rank-$r$-decomposition of a tensor $\Talg$ need not be unique, but a classical result of Kruskal~\cite{KRUSKAL197795} gives sufficient conditions for uniqueness. 

\begin{definition}
    The Kruskal rank of a matrix $\AA$ is the largest number $r$ such that any $r$ columns of $\AA$ are linearly independent. 
\end{definition}

\begin{theorem}[Kruskal~\cite{KRUSKAL197795}] \label{thm:Kruskal}
    The rank-$r$-decomposition of a three-way tensor $\Talg = [\UU, \VV, \ZZ]$ is unique up to scaling and permuting the columns of the factor matrices if 
    \[k_\UU + k_\VV + k_\ZZ \geq 2r+2,\]
    where $k_\UU, k_\VV, k_\ZZ$ denote the Kruskal rank of the matrices $\UU, \VV, \ZZ$ respectively. 
\end{theorem}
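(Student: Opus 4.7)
The plan is a rigidity argument: assume $\Talg$ admits a second rank-$r$ decomposition $[\UU',\VV',\ZZ']$ and show the two decompositions agree up to a common column permutation and a compatible column rescaling. I first reduce to a matrix statement via the \emph{slice} representation. Fixing the third mode, the $\ell$-th frontal slice satisfies
\begin{equation*}
\Talg_{::\ell} \;=\; \UU\,\diag(\ZZ_{\ell *})\,\VV^\tpose \;=\; \UU'\,\diag(\ZZ'_{\ell *})\,\VV'^\tpose ,
\end{equation*}
so for every $x \in \R^{d_3}$,
\begin{equation*}
\UU\,\diag(\ZZ^\tpose x)\,\VV^\tpose \;=\; \UU'\,\diag(\ZZ'^\tpose x)\,\VV'^\tpose .
\end{equation*}
The strategy is to vary $x$ and compare the two rank-$r$ factorizations of the resulting matrix to force the two sets of columns to match.

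The central tool I would isolate is Kruskal's \emph{permutation lemma}: if $\AA,\BB \in \R^{d\times r}$ share a column space and for every $y$ with $\omega(\BB^\tpose y) \le r - k_\AA + 1$ one has $\omega(\AA^\tpose y) \le \omega(\BB^\tpose y)$ (writing $\omega(\cdot)$ for the number of nonzero entries), then $\BB = \AA P D$ for some permutation $P$ and nonsingular diagonal $D$. I would prove this by induction on $r - k_\AA$; the base case $k_\AA = r$ is immediate because the minimal-support vectors in the row space of $\AA^\tpose$ are unique up to scaling, and the hypothesis forces the corresponding minimal-support vectors in the row space of $\BB^\tpose$ to be those same vectors rescaled. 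The inductive step is a matroid-style support-exchange argument on the null space of $\AA^\tpose$.

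To deploy the lemma on $\ZZ$ versus $\ZZ'$, I would restrict $x$ to linear subspaces that carve out prescribed sparsity patterns in $\ZZ^\tpose x$. The Kruskal-rank hypothesis on $\UU$ and $\VV$ ensures that $\rank(\UU\,\diag(\ZZ^\tpose x)\,\VV^\tpose) = \omega(\ZZ^\tpose x)$ whenever the latter is at most $\min(k_\UU,k_\VV)$. Matching this rank against the right-hand factorization and using $k_\UU + k_\VV + k_\ZZ \ge 2r+2$ to control how small $\omega(\ZZ'^\tpose x)$ can be, I verify the sparsity hypothesis needed by the permutation lemma and conclude $\ZZ' = \ZZ P_\ZZ D_\ZZ$. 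Symmetric applications to the other two unfoldings of $\Talg$ yield the analogous statements for $\UU$ and $\VV$; a final bookkeeping step shows that a single global column permutation works and the three diagonal scalings multiply to the identity on each column.

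The main obstacle I expect is the permutation lemma itself: it is purely combinatorial, but subtle because one must convert information about sparsity patterns of $\AA^\tpose y$ versus $\BB^\tpose y$ into an honest column-by-column bijection. A secondary difficulty is synchronizing the three-fold symmetry, i.e.\ making sure the permutations $P_\UU, P_\VV, P_\ZZ$ returned by three independent applications of the lemma coincide and that the rescalings are mutually consistent---this is handled by revisiting the slice identity with a specific $x$ once each factor-wise conclusion is in place.
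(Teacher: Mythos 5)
The paper does not prove this statement: it is imported verbatim as Kruskal's classical uniqueness theorem, with the proof delegated entirely to the cited reference. So there is no in-paper argument to compare against; what you have written is an outline of the standard proof (the one in Kruskal's original paper and its later streamlinings by Stegeman--Sidiropoulos and Rhodes), and at the level of architecture it is the right one: slice the tensor, reduce to the permutation lemma, verify its hypothesis by a rank-versus-support count, and apply it in each mode.

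As a proof, however, the outline has two substantive gaps, both located exactly where the real work of Kruskal's theorem lives. First, the permutation lemma is asserted rather than proved, and the base case as you describe it does not quite parse: when $k_\AA = r$ the matrix $\AA^\tpose$ is surjective onto $\R^r$, so $\omega(\AA^\tpose y)$ takes all values and there is no distinguished family of ``minimal-support vectors in the row space'' to compare; the actual base case runs through the observation that the hypothesis with $\omega(\BB^\tpose y)\le 1$ forces, for each $i$, the existence of a $y$ annihilating all columns of $\BB$ but the $i$-th and hence all but one column of $\AA$, and one must then count to see that distinct $i$ yield distinct surviving columns. The inductive step is likewise nontrivial and cannot be waved at as ``matroid-style.'' Second, the verification of the lemma's hypothesis is the crux and is missing its quantitative content: one needs the two-sided bound
\begin{equation*}
\omega(\ZZ'^\tpose x) \;\ge\; \rank\bigl(\UU'\diag(\ZZ'^\tpose x)\VV'^\tpose\bigr) \;=\; \rank\bigl(\UU\diag(\ZZ^\tpose x)\VV^\tpose\bigr) \;\ge\; \min\{\omega(\ZZ^\tpose x),k_\UU\}+\min\{\omega(\ZZ^\tpose x),k_\VV\}-\omega(\ZZ^\tpose x),
\end{equation*}
combined with $k_\UU+k_\VV+k_\ZZ\ge 2r+2$, and before that a preliminary argument that the alternative decomposition is genuinely rank $r$ with factors of full column rank (otherwise the middle equality of ranks gives nothing). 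Your sentence ``the Kruskal-rank hypothesis ensures $\rank(\UU\diag(\ZZ^\tpose x)\VV^\tpose)=\omega(\ZZ^\tpose x)$ whenever the latter is at most $\min(k_\UU,k_\VV)$'' is the easy half; the case $\omega(\ZZ^\tpose x)>\min(k_\UU,k_\VV)$ is where the displayed Sylvester-type inequality and the sum condition are actually consumed. The final synchronization of the three permutations and scalings is, as you say, routine once the per-mode conclusions are in hand.
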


\section{The algorithm} \label{sec:alg}

\subsection{Reducing $k$-MixProd to tensor decomposition} \label{sec:tensdecomp}
To motivate our algorithm, we first discuss a way of solving the $k$-MixProd identification problem for $n \geq 2k-1$ given perfect statistics. Consider three disjoint sets $S, T, U \subseteq [n]$ of $\zeta$-separated observables, such that $|S| = |T| = k-1$ and $|U| = 1$. For convenience, we index rows so that $U = \{1\}$. Consider the vector of perfect statistics $\mom = \g_{2k-1}(\mm[S \cup T \cup \{1\}]) $ that corresponds to the observables in $S \cup T \cup \{1\}$. We can naturally view $\mom$ as a three-way tensor $ \Talg \in \R^{2^S \times 2^T \times 2^{\{1\}}}$ whose entries are given by $\Talg_{A, B, C} = \mom_{A \cup B \cup C}$ for $A \subseteq S, B \subseteq T, C \subseteq \{1\}$. Since we have 
\begin{align*}
    \Talg_{A, B, C} = \sum_{j = 1}^k \pi_j \prod_{i \in A \cup B \cup C} \mm_{ij} = \sum_{j=1}^k \hset(\mm[S])_{A, j} \cdot \hset(\mm[T])_{B, j} \cdot (\hset(\mm_1) \cdot \pi_{\odot})_{C, j}, 
\end{align*}
the tensor $\Talg$ can be decomposed as $\Talg = \left[\hset(\mm[S]), \hset(\mm[T]), \hset(\mm_1) \pi_{\odot}\right]$. It will follow as a ``qualitative'' corollary of Theorem \ref{thm: singular} that $\hset(\mm[S])$ and $\hset(\mm[T])$ have full column rank, which implies both matrices have Kruskal rank $k$. Moreover, $\zeta$-separation of $X_1$ implies that the matrix $\hset(\mm_1) \cdot \pi_{\odot}$ has Kruskal rank $2$. Hence, by Theorem \ref{thm:Kruskal}, the decomposition of $\Talg$ is unique. We deduce that identifying the model parameters $\mm$ and $\pi$ from perfect statistics (provided in $\Talg$) is equivalent to computing the unique tensor decomposition of $\Talg$. An efficient algorithm for computing tensor decomposition in this setting has first been given by \cite{Leurgans93} and later analyzed for stability by \cite{Goyal14}, \cite{Bhaskara14}. The idea is to first project the components of $\Talg$ down to their image, i.e. find matrices $\UU, \VV \in \R^{2^{k-1} \times k}$ such that $\UU^\tpose \hset(\mm[S])$ and $\VV^\tpose \hset(\mm[T])$ are invertible. Then, compute $\hCC_{ST} = \UU^\tpose \CC_{ST} \VV$ and $\hCC_{ST, 1} = \UU^\tpose \CC_{ST, 1} \VV$ (see Definition~\ref{cts}) from the given statistics. Now, the key observation is that the tensor components $\UU^\tpose \hset(\mm[S])$ and $\VV^\tpose \hset(\mm[T])$ can be found as the eigenvectors of $\hCC_{ST, 1} \hCC_{ST}^{-1}$ and $\hCC_{ST, 1}^\tpose (\hCC_{ST}^\tpose)^{-1}$ respectively. This is because 
\begin{align}
    \notag \hCC_{ST, 1} \hCC_{ST}^{-1} &= \UU^\tpose \hset(\mm[S]) \cdot \pi_\odot \cdot  \mm_{1 \odot} \cdot \hset(\mm[T])^\tpose \VV \cdot \left(\UU^\tpose \hset(\mm[S]) \cdot  \pi_\odot \cdot \hset(\mm[T])^\tpose \VV \right)^{-1}\\
    \notag &= \UU^\tpose \hset(\mm[S]) \cdot \pi_\odot \cdot  \mm_{1 \odot} \cdot \hset(\mm[T])^\tpose \VV \cdot \left(\hset(\mm[T])^\tpose \VV \right)^{-1} \cdot \pi_\odot^{-1} \cdot \left(\UU^\tpose \hset(\mm[S])\right)^{-1}\\
    &=\UU^\tpose \hset(\mm[S]) \cdot  \mm_{1 \odot} \cdot \left(\UU^\tpose \hset(\mm[S])\right)^{-1}, \label{eq:diagonalization1}
\end{align}
and
\begin{align}
    \notag \hCC_{ST, 1}^\tpose (\hCC_{ST}^\tpose)^{-1} &= \left(\UU^\tpose \hset(\mm[S]) \cdot \pi_\odot \cdot  \mm_{1 \odot} \cdot \hset(\mm[T])^\tpose \VV\right)^\tpose \cdot \left(\left(\UU^\tpose \hset(\mm[S]) \cdot  \pi_\odot \cdot \hset(\mm[T])^\tpose \VV\right)^\tpose\right)^{-1}\\
    \notag &= \VV^\tpose \hset(\mm[T]) \cdot \pi_\odot \cdot  \mm_{1 \odot} \cdot \hset(\mm[S])^\tpose \UU \cdot \left(\hset(\mm[S])^\tpose \UU \right)^{-1} \cdot \pi_\odot^{-1} \cdot \left(\VV^\tpose \hset(\mm[T])\right)^{-1}\\
    &=\VV^\tpose \hset(\mm[T]) \cdot  \mm_{1 \odot} \cdot \left(\VV^\tpose \hset(\mm[T])\right)^{-1}. \label{eq:diagonalization2}
\end{align}
In both cases, the eigenvalues of the matrices above are given by the entries of $\mm_1$. Crucially, $\zeta$-separation of $\mm_1$ allows us to match up the columns of $\UU^\tpose \hset(\mm[S])$ and $\VV^\tpose \hset(\mm[T])$ (and guarantees numerical stability). The original entries of $\hset(\mm)$ and $\pi$ can then be found from linear systems. In fact, notice that
\begin{align}
    \UU^\tpose \hset(\mm[S]) \pi = \UU^\tpose \mom[2^S], \label{eq:pi}
\end{align}
and for any row $\mm_i$ with $i \notin S$, we have 
\begin{align}
    \UU^\tpose \hset(\mm[S]) \pi_\odot \mm_i^\tpose = \UU^\tpose \mom(R \cup \{i\})_{R \subseteq S}. \label{eq:mm}
\end{align}
At this point, $\pi$ and $\mm_i$ are the only unknowns in the equations above.  We now give Algorithm~\ref{alg:mp algo}, which performs the identification procedure described above. 

\begin{algorithm} 
    \caption{(adapted from \cite{Leurgans93}, \cite{Bhaskara14}) Identifies a mixture of product distributions on $2k-1$ binary variables given the joint distribution.} \label{alg:mp algo}
     \begin{lstlisting}[style=pseudocode,mathescape,numbers=left,escapechar=|,columns=fullflexible,breaklines=true]
     Input: Two disjoint sets $S,T$ of $\zeta$-separated, binary observables $X_i$, each of cardinality $k-1$; a single $\zeta$-separated, binary observable that is not part of $S, T$, wlog $X_1$; vector $\emom\in \rset^{2^{S\cup T \cup \{1\}}}$ ($\emom(R)$ is the empirical approximation to $E(X_R)=\g(\pi, \mm)(R)$).
     Construct $\eCC_{ST}$ and $\eCC_{ST, 1}$ by $(\eCC_{ST})_{AB} = \emom(A  \cup B)$ and $(\eCC_{ST, 1})_{AB} = \emom(A  \cup B \cup \{1\})$ for $A\subseteq S, B \subseteq T$. |\label{alg:appr}|
     Set $\hUU \in \R^{2^{|T|} \times k}$ to be the top $k$ left singular vectors of $\eCC_{ST}$ and $\hVV \in \R^{2^{|S|} \times k}$ to be the top $k$ right singular vectors of $\eCC_{ST}$. |\label{alg:SVD}|
     $\hCC_{ST} \gets \hUU^\tpose \eCC_{ST} \hVV$, $\hCC_{ST, 1} \gets \hUU^\tpose \eCC_{ST, 1} \hVV$ |\label{alg:project}|
     Set $\hSS$ to be the eigenvectors of $\hCC_{ST, 1}(\hCC_{ST})^{-1}$ (sorted from highest eigenvalue to lowest). |\label{alg:start-decompose}|
     Set $\hTT$ to be the eigenvectors of $\hCC_{ST, 1}^{\tpose}(\hCC_{ST}^{\tpose})^{-1}$ (sorted from highest eigenvalue to lowest). |\label{alg:end-decompose}|
     $\tilde{\pi} \gets \hSS^{-1} \cdot \hUU^\tpose \Paren{\emom(R)_{R\subseteq S}}$ |\label{alg:solve for rest start line}|
     for every $i\in T \cup \{1\}$, $\tilde{\mm}_i \gets \Paren{(\emom(R\cup \Set{i}))_{R\subseteq S}}^{\tpose} \cdot \hUU \cdot {\Paren{\hSS^{\tpose}}}^{-1} \cdot \tilde{\pi}_{\odot}^{-1}$. 
     for every $i \in S$, $\tilde{\mm_i} \gets \Paren{(\emom(R \cup \Set{i}))_{R\subseteq T}}^{\tpose} \cdot \hVV \cdot {\Paren{\hTT^{\tpose}}}^{-1} \cdot \tilde{\pi}_{\odot}^{-1}$. |\label{alg:solve for rest end line}|
     \end{lstlisting}  
 \end{algorithm}

\emph{Some comments on the algorithm:} first, it is not necessary that the sets $S,T$ are of size $k-1$; typically, $\lceil \lg k \rceil$ will suffice. It is not even necessary that observables are $\zeta$-separated. These assumptions guarantee that $\sigma_k(\CC_{ST})$ is large, but the latter, along with the argument that this is w.h.p. reproduced for $\sigma_k(\eCC_{ST})$, is sufficient for the success of the algorithm. 

Second, we do not need to start with knowledge of $S,T$. Given $n$ variables of which an unknown subset of $2k-1$ variables are $\zeta$-separated, we can simply perform the algorithm for all possible choices of subsets $S, T$ of size $k-1$ (and an additional single row). Then, we choose the computed model whose statistics are closest to the observed statistics as the final output. This exhaustive search can increase the runtime by a factor of about $n^{2k}$; but actually all these complexities are only exponential in the \emph{actual} needed size of $S\cup T$, which as noted, for generic $\mm$ will be as small as $\lceil \lg k \rceil$, which makes the algorithm far more attractive in practice. 

Our main result essentially states that Algorithm~\ref{alg:mp algo} is robust to noise, i.e. still performs identification accurately when only presented with approximate statistics. 

\begin{theorem}\label{thm: main}
Let $n=2k-1$ and fix any $\ep \in (0, \zeta/2)$.
Let $(\pi,\mm) \in \DD_{n,\zeta,\pimin}$. If Algorithm \ref{alg:mp algo} is given approximate statistics $\emom$ on $(X_1,\ldots,X_n)$ as input, 
satisfying $d_\stat(\g(\pi,\mm),\emom) < \pimin^{O(1)} \zeta^{O(k)} \ep$, 
then in runtime $\exp(O(k))$ the algorithm outputs
$(\tilde{\pi},\tilde{\mm})$ s.t.\ \be d_\model((\pi,\mm),(\tilde{\pi},\tilde{\mm}))<\ep. \label{eq:output} \ee
Moreover, this output is essentially unique, in the sense that: any (not necessarily 
$\zeta$-separated) model $(\pi', \mm')$ with $d_\stat(\gamma(\pi, \mm),
 \gamma(\pi', \mm')) < \pimin^{O(1)} \zeta^{O(k)} \ep$ also satisfies~\eqref{eq:output}.
\end{theorem}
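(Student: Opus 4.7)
The plan is to track the propagation of noise through each step of Algorithm \ref{alg:mp algo}, reducing everything to the condition number bound on Hadamard extensions (Theorem \ref{thm: singular}) together with standard matrix and eigenvalue perturbation inequalities. First, I would establish that under assumptions \ref{ass:1}--\ref{ass:2}, the clean matrices $\CC_{ST}$ and $\CC_{ST,1}$ are well-conditioned on their rank-$k$ image: from the factorizations in Definition \ref{cts},
\begin{equation*}
\sigma_k(\CC_{ST}) \;\geq\; \pimin \cdot \sigma_k(\hset(\mm[S])) \cdot \sigma_k(\hset(\mm[T])) \;\geq\; \pimin\cdot\zeta^{O(k)},
\end{equation*}
using Theorem \ref{thm: singular} on each Hadamard extension. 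Since every entry of $\eCC_{ST}-\CC_{ST}$ and $\eCC_{ST,1}-\CC_{ST,1}$ is a single statistic error, the Frobenius norms are at most $2^{O(k)}\cdot d_\stat(\g(\pi,\mm),\emom)$. Weyl's inequality then keeps $\eCC_{ST}$ well-conditioned in its top-$k$ subspace, and Wedin's $\sin\Theta$ theorem gives orthonormal $\hUU,\hVV$ that are $\pimin^{-O(1)}\zeta^{-O(k)}\cdot d_\stat$-close (in the appropriate subspace distance) to orthonormal bases of the column spans of $\hset(\mm[T])$ and $\hset(\mm[S])$.

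Next, I would compare $\hCC_{ST}$ and $\hCC_{ST,1}$ to their clean projected analogues and pick up errors of the same order (times polynomial-in-$k$ bookkeeping factors). By the identities \eqref{eq:diagonalization1}--\eqref{eq:diagonalization2}, the clean versions of $\hCC_{ST,1}\hCC_{ST}^{-1}$ and $\hCC_{ST,1}^{\tpose}(\hCC_{ST}^\tpose)^{-1}$ are similar to $\diag(\mm_1)$, whose eigenvalues are $\zeta$-separated by assumption \ref{ass:1}. A standard eigenvalue perturbation bound (Bauer--Fike) then sorts the noisy eigenvalues into a unique matching permutation $\rho\in S_k$, while a Davis--Kahan-type argument controls the eigenvector perturbation, provided the total operator-norm perturbation is well below $\zeta$. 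This is exactly where the $\zeta^{O(k)}$ factor in the hypothesized noise level is consumed: the matrix $\hCC_{ST}^{-1}$ inflates noise by $\pimin^{-O(1)}\zeta^{-O(k)}$, so the perturbation of $\hCC_{ST,1}\hCC_{ST}^{-1}$ is $\pimin^{-O(1)}\zeta^{-O(k)}\cdot d_\stat$, which is forced to be $\ll\zeta$.

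With $\hSS,\hTT$ accurate up to $\pimin^{-O(1)}\zeta^{-O(k)}\cdot d_\stat$, the remaining linear solves in lines \ref{alg:solve for rest start line}--\ref{alg:solve for rest end line} invert well-conditioned $k\times k$ matrices (in particular $\hSS$ itself and $\tilde\pi_\odot$, with $\tilde\pi\approx\pi$ bounded below by $\pimin/2$). A standard backward-error analysis then shows that the resulting $\tilde\pi$ and $\tilde\mm_i$ are within $\pimin^{-O(1)}\zeta^{-O(k)}\cdot d_\stat$ of $\pi$ and $\mm_i$. Choosing the constants in the exponents of $\pimin$ and $\zeta$ large enough to absorb the polynomial-in-$k$ overhead, and using $\ep<\zeta/2$ to fix the label permutation, yields the desired $d_\model((\pi,\mm),(\tilde\pi,\tilde\mm))<\ep$. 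For the ``moreover'' clause, any alternative model $(\pi',\mm')$ producing statistics within $\pimin^{O(1)}\zeta^{O(k)}\ep$ of $\g(\pi,\mm)$ could equally well be fed to the algorithm (after a constant rescaling of the noise budget), so the same argument shows that the algorithm's unique output is simultaneously within $\ep/2$ of $(\pi,\mm)$ and of $(\pi',\mm')$; the triangle inequality then yields~\eqref{eq:output} for $(\pi',\mm')$.

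The main obstacle is the eigenvector perturbation step in the simultaneous diagonalization: because $\hCC_{ST}^{-1}$ can have operator norm as large as $\pimin^{-O(1)}\zeta^{-O(k)}$, the perturbation of $\hCC_{ST,1}\hCC_{ST}^{-1}$ scales like that same inverse condition factor multiplied by the raw statistical error. Ensuring that this perturbation is still safely below the $\zeta$-gap between eigenvalues---so that Bauer--Fike/Davis--Kahan yield a meaningful eigenvector error \emph{and} a correct permutation match between the $\hSS$- and $\hTT$-columns---is the most delicate quantitative part of the argument, and is precisely what forces the hypothesized noise level $\pimin^{O(1)}\zeta^{O(k)}\ep$.
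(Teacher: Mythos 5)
Your main argument follows the paper's proof in all essentials: the same pipeline (condition-number bound on $\CC_{ST}$ via Theorem~\ref{thm: singular}, Weyl-type stability of the top-$k$ singular subspaces, perturbation of the simultaneous diagonalization, then well-conditioned linear solves), and you correctly identify the diagonalization step as the place where the $\pimin^{O(1)}\zeta^{O(k)}$ noise budget is consumed. The one presentational difference is that the paper does not re-derive the eigenvalue/eigenvector perturbation bounds from Bauer--Fike and Davis--Kahan; it packages that step by citing the robustness theorem of Bhaskara et al.\ (Theorem~\ref{thm:decompose}), feeding it the condition-number and closeness bounds of Lemmas~\ref{lem:interbds} and~\ref{lem:req}. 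Your direct route would work and is the content of that cited theorem, so for the first claim of Theorem~\ref{thm: main} your proposal is sound.

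There is, however, a genuine gap in your treatment of the ``moreover'' clause. You assert that feeding $\g(\pi',\mm')$ to the algorithm makes the output simultaneously $\ep/2$-close to $(\pi,\mm)$ \emph{and} to $(\pi',\mm')$ ``by the same argument.'' Closeness to $(\pi,\mm)$ does follow from the first part, since $\g(\pi',\mm')$ is a legitimate approximate-statistics input for the separated model $(\pi,\mm)$. But closeness to $(\pi',\mm')$ does \emph{not} follow from the same argument, because the first part's error analysis is only proved for models in $\DD_{n,\zeta,\pimin}$, and $(\pi',\mm')$ is explicitly not assumed to be $\zeta$-separated or to have $\pimin$-bounded mixing weights. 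The paper closes this hole with a separate observation: when the algorithm is run on the \emph{perfect} statistics of $(\pi',\mm')$, the matrix $\eCC_{ST}$ still has rank $k$ (it is a small perturbation of the well-conditioned $\CC_{ST}$ of the true model), and the eigenvalues $\tilde{\mm}'_1$ are separated because they are $\ep$-close to the $\zeta$-separated $\mm_1$ with $\ep<\zeta/2$; under exactly these two conditions the exact-statistics analysis of Section~\ref{sec:tensdecomp} shows the algorithm recovers $(\pi',\mm')$ \emph{exactly}, and the triangle inequality then gives $d_\model((\pi,\mm),(\pi',\mm'))\le\ep$. You need some version of this ``exactness under inherited rank and separation'' step; without it the uniqueness claim is unsupported.
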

\begin{corollary} \label{cor:upperbd}
For $n=2k-1$ random variables $X_i$, sample complexity 
\[(1/\zeta)^{O(k)} (1/\pimin)^{O(1)} (1/\eps)^{2}\]
suffices to compute a model that w.h.p. satisfies~\eqref{eq:output}.
\end{corollary}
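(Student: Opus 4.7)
The plan is to derive the sample complexity by invoking Theorem~\ref{thm: main} on empirical statistics $\emom$ estimated from $N$ iid samples, and choosing $N$ large enough that the hypothesis $d_\stat(\g(\pi,\mm),\emom) < \delta$ of the theorem, with $\delta := \pimin^{O(1)} \zeta^{O(k)} \eps$, is satisfied with high probability. All the analytic heavy lifting has already been done in Theorem~\ref{thm: main}; what remains is a routine concentration argument.

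First, for each fixed $S \subseteq [n]$ I would estimate $\mom(S) = \E[X_S]$ by the empirical mean of $X_S = \prod_{i \in S} X_i$ over $N$ independent samples. Since the $X_i$ are supported in $[0,1]$, so is $X_S$, and Hoeffding's inequality bounds the deviation probability by $2\exp(-2N\delta^2)$ for each $S$. Next, since the $d_\stat$ metric is an $L_\infty$ norm over the $2^n = 2^{2k-1}$ subsets $S \subseteq [n]$, I would take a union bound across all such $S$, which produces the guarantee
\begin{equation*}
\Pr\!\left[d_\stat(\emom,\g(\pi,\mm)) \geq \delta\right] \leq 2^{2k}\exp(-2N\delta^2).
\end{equation*}
Setting this probability to, say, $1/\poly(k)$ requires $N = O(k/\delta^2)$.

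Substituting $\delta = \pimin^{O(1)} \zeta^{O(k)} \eps$ and absorbing the leading factor of $k$ into $(1/\zeta)^{O(k)}$ (using the trivial bound $\zeta \leq 1/k$ noted in the introduction), I recover the claimed sample complexity $(1/\zeta)^{O(k)} (1/\pimin)^{O(1)} (1/\eps)^2$. Conditional on the good event that $d_\stat(\emom,\g(\pi,\mm)) < \delta$, feeding $\emom$ to Algorithm~\ref{alg:mp algo} and applying Theorem~\ref{thm: main} yields a model $(\tilde\pi,\tilde\mm)$ satisfying~\eqref{eq:output}, completing the argument.

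There is essentially no obstacle here: the only detail to monitor is that the $O(k)$ logarithmic overhead from the union bound, and the constants hidden in the $O(\cdot)$ of Theorem~\ref{thm: main}'s hypothesis, both get absorbed cleanly into the exponent of $(1/\zeta)^{O(k)}$, without corrupting the polynomial dependence on $1/\pimin$ or the $(1/\eps)^2$ dependence on the target accuracy.
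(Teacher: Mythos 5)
Your argument is correct and is exactly the route the paper takes: the paper derives Corollary~\ref{cor:upperbd} from Theorem~\ref{thm: main} "by standard Chernoff bounds and a union bound," which is precisely your Hoeffding-plus-union-bound over the $2^{2k-1}$ subsets with accuracy $\delta = \pimin^{O(1)}\zeta^{O(k)}\eps$. Nothing further is needed.
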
 
\begin{corollary} \label{cor-n-more-3k-3}
Let the number of observables be $n\geq 2k-1$, and of these let some $2k-1$ be $\zeta$-separated. If this subset is known, then sample complexity
\[\log n \cdot (1/\zeta)^{O(k)} (1/\pimin)^{O(1)} (1/\eps)^{2}\]
and post-sampling runtime $n \cdot \exp(O(k))$ suffices to compute a model that w.h.p.\ satisfies~\eqref{eq:output}.
If the subset is not known, then the same sample complexity
and post-sampling runtime $n^{2k} \cdot \exp(O(k))$ suffices to compute a model that w.h.p.\ satisfies~\eqref{eq:output}.
\end{corollary}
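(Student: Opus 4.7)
The plan is to reduce both cases to the base case $n=2k-1$ already handled by Corollary~\ref{cor:upperbd}, paying only a mild multiplicative overhead in both samples (a $\log n$ factor) and runtime.

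\emph{Known-subset case.} Suppose a set $S^{\star}\subseteq[n]$ of $2k-1$ \ $\zeta$-separated variables is given. Restrict attention to $S^{\star}$ and run Algorithm~\ref{alg:mp algo} on these variables; Corollary~\ref{cor:upperbd} yields $\tilde\pi$ and $\tilde\mm_i$ for every $i\in S^{\star}$, together with the intermediate matrices $\hUU, \hSS$ produced inside the algorithm. For each of the remaining $n-(2k-1)$ variables $i\notin S^{\star}$, reuse those matrices and apply formula~\eqref{eq:mm}:
\[
 \tilde{\mm}_i \gets \Paren{(\emom(R \cup \Set{i}))_{R\subseteq S}}^{\tpose} \cdot \hUU \cdot {\Paren{\hSS^{\tpose}}}^{-1} \cdot \tilde{\pi}_{\odot}^{-1}.
\]
The error analysis that yields Theorem~\ref{thm: main} for rows $i\in T\cup\{1\}$ applies unchanged to this row-by-row extension, since the condition-number bounds of Theorem~\ref{thm: singular} already control $\kappa(\hSS)$ and $\kappa(\hUU^{\tpose}\hset(\mm[S^{\star}]))$, and each additional $\tilde{\mm}_i$ depends on a fresh block of $2^{k-1}$ empirical moments through the same projected linear system. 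Each extra solve costs $\exp(O(k))$ time, giving total post-sampling runtime $n\cdot\exp(O(k))$.

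\emph{Sample complexity.} The base case required every moment used by the algorithm to be accurate to precision $\delta:=\pimin^{O(1)}\zeta^{O(k)}\ep$. Here we additionally need precision $\delta$ on the $(n-2k+1)\cdot 2^{k-1}$ moments $\emom(R\cup\{i\})$ with $R\subseteq S^{\star}$, $i\notin S^{\star}$. A Hoeffding bound plus a union bound over the $O(n\cdot 2^{k})$ statistics of interest show that $s=O(\log(n\cdot 2^{k})/\delta^{2})=\log n\cdot(1/\zeta)^{O(k)}(1/\pimin)^{O(1)}(1/\eps)^{2}$ iid samples suffice (the additive $O(k)$ from $\log 2^{k}$ is absorbed into the $(1/\zeta)^{O(k)}$ factor).

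\emph{Unknown-subset case.} Enumerate all $\binom{n}{2k-1}\leq n^{2k}$ candidate subsets of size $2k-1$, and for each run the known-subset procedure above to obtain a candidate full model $(\tilde\pi^{(c)},\tilde\mm^{(c)})$. Select the candidate minimizing $\max_{S\subseteq[n],\;|S|\leq 2k-1}|\g(\tilde\pi^{(c)},\tilde\mm^{(c)})(S)-\emom(S)|$. The true $\zeta$-separated subset is among the candidates, so by Theorem~\ref{thm: main} the corresponding candidate's statistics match $\emom$ within $O(\delta)$; the uniqueness clause of Theorem~\ref{thm: main} then guarantees that any other candidate passing the same bound is likewise $\ep$-close to the true model in $d_{\model}$. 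The enumeration multiplies runtime by $n^{O(k)}$, giving the claimed $n^{2k}\cdot\exp(O(k))$.

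\emph{Main obstacle.} The delicate point is the selection argument in the unknown-subset case: the criterion uses only the empirical statistics, with no prior knowledge of which subset is separated, yet must certify $d_{\model}$-closeness to the truth. This is exactly what the uniqueness clause of Theorem~\ref{thm: main} provides, but one must verify that restricting the $d_{\stat}$ comparison to subsets of size at most $2k-1$ (needed to keep validation polynomial in $n$ and compatible with the sample budget) does not weaken the implication. This restriction is safe because the aperture-$2k-1$ statistics already determine the $k$-MixProd model up to relabeling, so the proof of Theorem~\ref{thm: main} never appeals to higher-order subsets.
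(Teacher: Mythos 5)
Your overall strategy is the same as the paper's: for a known subset, run Algorithm~\ref{alg:mp algo} on it and extend to the remaining rows one at a time via~\eqref{eq:mm} (the paper happens to use the $\hVV,\hTT$ variant with $R\subseteq T$, but that is immaterial), with a Chernoff--union-bound accounting for the extra $2^{k-1}(n-2k+1)$ moments; for an unknown subset, enumerate all $\binom{n}{2k-1}$ candidates and select by empirical statistical fit, certifying the winner via the uniqueness clause of Theorem~\ref{thm: main}. Two points in your unknown-subset argument need repair. First, the claim that the correct candidate's statistics ``match $\emom$ within $O(\delta)$'' is not right: Algorithm~\ref{alg:mp algo} does not return a model fitting the input statistics to within $O(\delta)$; it returns a model within $d_\model$-distance $\delta\cdot(\pimin\zeta^{k})^{-C}$ of the truth, and mapping back through $\g$ costs another $\exp(O(k))$ factor, so the correct candidate's fit is only $2^{O(k)}\delta(\pimin\zeta^{k})^{-C}$. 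Since the uniqueness clause requires the winner's $d_\stat$-discrepancy to be below $\eps(\pimin\zeta^{k})^{C}$, you must take $\delta$ smaller by the \emph{square} of the $(\pimin\zeta^{k})^{-C}$ factor (the paper requires $\delta<2^{-4k}(\pimin\zeta^{k})^{2C}\eps$); this is a constant-in-the-exponent change and does not alter the stated complexity, but as written your threshold argument does not close. Second, your validation criterion ranges over all $S\subseteq[n]$ with $|S|\le 2k-1$, i.e.\ $\Theta(n^{2k-1})$ statistics per candidate, which multiplied by $\binom{n}{2k-1}$ candidates gives total work $n^{4k-O(1)}$, exceeding the claimed $n^{2k}\exp(O(k))$; the paper avoids this by validating each candidate $A$ only against $\emom[2^{A}]$ (the $2^{2k-1}$ statistics internal to $A$), which is both cheaper and sufficient, since Theorem~\ref{thm: main} for the $2k-1$ variables of the true subset never appeals to statistics involving outside variables. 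With those two adjustments your proof coincides with the paper's.
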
 
Corollary~\ref{cor:upperbd} follows from Theorem~\ref{thm: main} by standard Chernoff bounds and a union bound. The proofs of Theorem~\ref{thm: main} and Corollary~\ref{cor-n-more-3k-3} are in Section~\ref{apx: analysis}.

\subsection{Annotated steps of the algorithm}
This section describes what each non-trivial line of the algorithm (Alg.~\ref{alg:mp algo}) accomplishes, and in each case points to the part of the analysis necessary to justify it.
We first set some definitions for the analysis. Let $n=2k-1$, let $\{2, \dots, n\}$ be the disjoint union of sets $S,T$ each of size $k-1$. Let $\mom=\g_n(\pi, \mm)$ and let $\emom\in \rset^{\{0,1\}^n}$ be the empirical statistics. The analysis relies on assuming that $(\pi, \mm) \in \DD_{n,\zeta,\pimin}$. In the sequel let \be \ds:=d_\stat(\emom,\mom). \label{defn:ds} \ee

Now for the line-by-line:
\begin{itemize}

\item Line~\ref{alg:SVD}: 
The SVD of $\eCC_{ST}$ can be computed with high numerical accuracy in time $\exp(O(k))$ using, for instance, the Golub-Kahan-Reinsch algorithm (see Lemma~\ref{lem:svd} and \cite{GvL-4}, chapter 8). The span of the top $k$ left and right singular vectors of $\eCC_{ST}$ approximate the images of $\hset(\mm[S])$ and $\hset(\mm[T])$ respectively. 

\item Line~\ref{alg:project}:
We project on the top $k$ singular vectors of $\eCC_{ST}$ to get an invertible matrix $\hCC_{ST}$. Lemma \ref{lem:interbds} shows that $\hCC_{ST}$ is well-conditioned, given that $\ds$ is small enough. Note that the two matrices $\hCC_{ST}$ and $\hCC_{ST, 1}$ can be arranged to a $k \times k \times 2$-tensor $\hat{\Talg}$ that is close to the tensor $\Talg = \left[\hUU^\tpose \hset(\mm[S]), \hVV^\tpose \hset(\mm[T]), \hset(\mm_1) \pi_\odot\right]$ by Lemma~\ref{lem:req}.

\item Lines \ref{alg:start-decompose},\ref{alg:end-decompose}: 
These two lines implement the core of the tensor decomposition algorithm from \cite{Leurgans93}, as explained in the previous section. Hence, the matrices $\hSS$ and $\hTT$ approximate the tensor components $\hUU^\tpose \hset(\mm[S])$ and $\hVV^\tpose \hset(\mm[T])$ of $\Talg$. The diagonalization steps can be performed with high numerical accuracy in time $\exp(O(k))$ using, for instance, the algorithm from \cite{diagonalization}. For the error bounds, we rely on Theorem~\ref{thm:decompose} from \cite{Bhaskara14}.

\item Lines \ref{alg:solve for rest start line}-\ref{alg:solve for rest end line}: Here, we solve for the model parameters $\pi$ and $\mm$, using the equations~\eqref{eq:pi} and~\eqref{eq:mm}. For error control, we apply Lemma~\ref{lem:linearsystems}.
\end{itemize}

\section{The condition number bound} \label{sec:cond} 
The key to the sample-complexity and runtime bounds for our algorithm lies in the following condition number bound for the Hadamard Extension.

\begin{theorem}\label{thm: singular} 
\begin{enumerate} \item \label{sing1}
Let $\mm$ consist of $k-1$ \, $\zeta$-separated rows in $\rset^k$, and observe that the singular values of 
$\hset(\mm) = \hset(\mm)$ satisfy $\sigma_1(\hset(\mm))\geq \ldots \geq \sigma_k(\hset(\mm)) \geq 0=\sigma_{k+1}(\hset(\mm))=\ldots=\sigma_{2^{k-1}}(\hset(\mm))$. Then
\be \sigma_k(\hset(\mm)) > \frac{1}{\sqrt{k}} \left(\frac{\zeta}{2\sqrt{5}}\right)^{k-1} =: \bsig. \label{bsig} \ee
\item \label{sing2} Let $(\pi,\mm)\in \DD_{2k-1,\zeta,\pimin}$ and let $\CC_{ST}$ be as in Defn.~\ref{cts}. Then
\[ \sigma_k(\CC_{ST}) > \pimin \bsig^2= \frac{\pimin}{k} \left(\frac{\zeta}{2\sqrt{5}}\right)^{2k-2}. \]
\end{enumerate}
\end{theorem}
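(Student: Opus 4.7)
My plan is to prove Part 1 by a duality argument: I construct a $2^{k-1} \times k$ matrix $V$ whose columns are themselves flattened rank-1 tensors, chosen so that $V^\tpose \hset(\mm)$ is a diagonal $k \times k$ matrix with diagonal entries bounded below in magnitude. Then since $\sigma_k(\hset(\mm)) \geq \sigma_k(V^\tpose \hset(\mm))/\|V\|$, the problem reduces to lower-bounding $\sigma_k(V^\tpose \hset(\mm))$ and upper-bounding $\|V\|$. The motivation to take $V$'s columns to be flattened rank-1 tensors is exactly the structure of $\hset(\mm)$ itself: each column $\hset(\mm)_{*j}$ is the flattening of $\bigotimes_{i=1}^{k-1}(1,\mm_{ij})$, so an inner product against another flattened rank-1 tensor decouples into a product of $k-1$ two-dimensional inner products---realizing the abstract's promise to study Hadamard extensions by their action on flattened rank-1 tensors.

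Concretely, for each $\ell \in [k]$ I pick an arbitrary bijection $\tau_\ell : [k-1] \to [k]\setminus\{\ell\}$ and let $v_\ell \in \R^{2^{k-1}}$ be the flattening of $\bigotimes_{i=1}^{k-1} (-\mm_{i,\tau_\ell(i)},\, 1)$. A direct computation gives $(V^\tpose \hset(\mm))_{\ell,j} = \prod_{i=1}^{k-1}(\mm_{ij} - \mm_{i,\tau_\ell(i)})$. When $j \neq \ell$, surjectivity of $\tau_\ell$ onto $[k]\setminus\{\ell\}$ forces some factor to vanish; when $j = \ell$, each of the $k-1$ factors has magnitude at least $\zeta$ by $\zeta$-separation, so the product has magnitude at least $\zeta^{k-1}$. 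Hence $V^\tpose \hset(\mm)$ is diagonal with $\sigma_k \geq \zeta^{k-1}$. For $\|V\|$, I use $\|v_\ell\|^2 = \prod_i (1+\mm_{i,\tau_\ell(i)}^2)$; under the setup's bound $\mm_{ij}\in[0,1]$, each factor is $\leq 2$, giving $\|V\| \leq \|V\|_F \leq \sqrt{k \cdot 2^{k-1}}$ and hence $\sigma_k(\hset(\mm)) \geq \frac{1}{\sqrt{k}}(\zeta/\sqrt{2})^{k-1}$, which already implies the stated $\bsig$.

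Part 2 then follows by composition. Write $\CC_{ST} = \hset(\mm[S]) \cdot \pi_{\odot} \cdot \hset(\mm[T])^\tpose$. By Part 1 applied separately to $\mm[S]$ and $\mm[T]$, both outer factors have $\sigma_k \geq \bsig$, and the diagonal middle factor satisfies $\sigma_k(\pi_{\odot}) \geq \pimin$. The inequality $\sigma_k(AB) \geq \sigma_k(A)\sigma_k(B)$ is valid here since in each application the inner dimension equals $k$ and the left factor has full column rank; applying it twice yields $\sigma_k(\CC_{ST}) \geq \pimin \bsig^2$, as claimed. The main obstacle is calibrating the numerical constant in Part 1: my construction yields $\sqrt{2}$ in the denominator while the theorem states the (strictly weaker) $2\sqrt{5}$. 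Since my bound is strictly tighter, the theorem as stated follows either way; the $2\sqrt{5}$ presumably reflects either a slightly different choice of dual rank-1 tensors $v_\ell$ that trades sharpness for robustness, or an analysis tolerating a broader range of $\mm$ entries than $[0,1]$. The qualitative $(\zeta/O(1))^{k-1}$ shape of the bound is robust to such variations.
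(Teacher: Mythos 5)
Your proposal is correct, and for Part~\ref{sing1} it takes a genuinely different route from the paper. The paper defines $\tau(\mm)=\min_{0\neq h\in\hset_1}\Norm{h^\tpose\hset(\mm)}/\Norm{h}$, proves $\sigma_k(\hset(\mm))\geq\tau(\mm)/\sqrt{k}$ by pairing the minimizing right singular vector with a \emph{single} rank-one annihilator $(\mm_{11};-1)\odot\cdots\odot(\mm_{k-1,k-1};-1)$, and then lower-bounds $\tau(\mm)$ over \emph{all} rank-one tensors $\hset(\gb)$ via a three-case analysis on each $\gb_i$; the constant $2\sqrt{5}$ comes from that worst case. You instead assemble $k$ explicit rank-one annihilators into a matrix $V$ with $V^\tpose\hset(\mm)$ diagonal and well-conditioned, and conclude via $\sigma_k(\hset(\mm))\geq\sigma_k(V^\tpose\hset(\mm))/\Norm{V}$. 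Both arguments rest on the same structural fact (a flattened rank-one tensor paired with a column of $\hset(\mm)$ factors as $\prod_i(a_i+b_i\mm_{ij})$), but yours avoids the minimax over all of $\hset_1$ entirely, is shorter, and yields the strictly sharper constant $\sqrt{2}$ in place of $2\sqrt{5}$ (using $|\mm_{ij}|\leq 1$, which the paper's case analysis also needs, despite Part~\ref{sing1} being phrased for rows in $\rset^k$). What the paper's route buys is the standalone bound on $\tau(\mm)$ itself, i.e., a guarantee against \emph{every} rank-one left test vector, which is the conceptual point advertised in the abstract, whereas your bound certifies only the specific family $\{v_\ell\}$; for Theorem~\ref{thm: singular} as stated this extra generality is not needed. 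Your Part~\ref{sing2} is essentially the paper's argument (composing $\sigma_k$ bounds through $\hset(\mm[S])\cdot\pi_\odot\cdot\hset(\mm[T])^\tpose$), and your justification of $\sigma_k(AB)\geq\sigma_k(A)\sigma_k(B)$ in this full-column-rank, inner-dimension-$k$ setting is sound.
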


\begin{definition} $\hset_p=\{\hset(\nn): \nn \in \rset^{[k-1] \times [p]}\}$. (So $\hset_1$ consists of rank-1 tensors of order $k-1$.) \end{definition}

The proof of Theorem~\ref{thm: singular} relies on the following insight. Since $\hset(\mm)$ has dimensions $2^{k-1} \times k$, $\sigma_k(\hset(\mm))$ characterizes the least norm of $\hset(\mm) \cdot v$ ranging over any unit vector $v$ (all norms in $L_2$), but it does \emph{not} characterize the least norm of vectors of the form $h^\tpose \cdot \hset(\mm)$, which is $0$ as the left-kernel of $\hset(\mm)$ is of course very large. The insight is that it \emph{does} become possible to bound $\sigma_k(\hset(\mm))$ in terms of such vectors $h$, \emph{provided $h$ is restricted to rank $1$ tensors.} With this in mind we define:
 \[ \tau(\mm)=\min_{0\neq h\in \hset_1} \Norm{h^\tpose \cdot \hset(\mm)}/\Norm{h}. \]
\begin{proofof}{Theorem \ref{thm: singular}} 
To show Part~\ref{sing2} from Part~\ref{sing1}: The SVD implies there is a $k$-dimensional space $V$ s.t.\ $\forall v\in V$, $\|v^\tpose \cdot \hset(\mm[S])\| \geq \bsig \|v\|$. Further, for all $w\in\rset^k$, $\|w^\tpose \cdot \pi_\odot\|\geq \pimin \|w\|$. And for all $w\in\rset^k$, $\| w^\tpose \cdot \hset(\mm[T])^\tpose \| \geq \bsig \|w\|$. So $\forall v\in V$, $\|v\cdot \CC_{TS}\| \geq \pimin \bsig^2 \|v\|$. 

In order to establish Part~\ref{sing1} we prove the following two lemmas. 
\begin{lemma}\label{lem:7} $\sigma_k(\hset(\mm)) \ge \tau(\mm) / \sqrt{k}$. \end{lemma}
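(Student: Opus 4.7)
\emph{Plan.} The plan is to prove the inequality by a test-vector argument: find $k$ unit-norm rank-$1$ tensors $h_1,\dotsc,h_k \in \hset_1$ for which the $k\times k$ matrix $R$ whose rows are $r_\ell := h_\ell^\tpose \hset(\mm)$ is simultaneously well-conditioned in a way controlled from below by $\tau(\mm)$, and then read off the Lemma as a direct consequence of Cauchy--Schwarz.

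In detail, let $v^\ast \in \R^k$ be any unit vector with $\|\hset(\mm)\, v^\ast\| = \sigma_k(\hset(\mm))$. For unit $h_\ell$, Cauchy--Schwarz gives $|r_\ell v^\ast| = |h_\ell^\tpose \hset(\mm)\,v^\ast| \le \|h_\ell\|\cdot\|\hset(\mm)\,v^\ast\| = \sigma_k(\hset(\mm))$, hence
\[
  \sigma_k(R)^2 \le \|R v^\ast\|^2 \,=\, \sum_{\ell=1}^k (r_\ell v^\ast)^2 \,\le\, k\,\sigma_k(\hset(\mm))^2.
\]
On the other hand, by the very definition of $\tau$, each $\|r_\ell\| \ge \tau(\mm)\|h_\ell\| = \tau(\mm)$. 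Therefore, if the construction can be arranged so that $\sigma_k(R) \ge \tau(\mm)$, then combining the two yields $\tau(\mm)^2 \le k\,\sigma_k(\hset(\mm))^2$, which is precisely the desired bound.

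The crux of the proof is thus the choice of $h_1,\dotsc,h_k$ with $\sigma_k(R)\ge\tau(\mm)$. The natural candidate is the normalized columns of $\hset(\mm)$ itself: $h_\ell := c_\ell/\|c_\ell\|$ where $c_\ell = \hset(m_{\ast\ell})$ is itself rank-$1$ and lies in $\hset_1$. With this choice $R = D^{-1} G$, where $G = \hset(\mm)^\tpose\hset(\mm)$ is the Gram matrix of the columns and $D = \diag(\|c_\ell\|)$; the row-norm bound $\|r_\ell\|\ge\tau(\mm)$ is then nothing but the defining inequality of $\tau$ applied to $h=c_\ell$.

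The main obstacle I expect is the final verification $\sigma_k(D^{-1}G)\ge\tau(\mm)$. My plan is to establish it by combining the multiplicative inequality $\sigma_k(D^{-1}G)\ge\sigma_k(G)/\|D\|$ for the smallest singular value of a product, with the normalization $\tr(D^{-1}GD^{-1})=k$ of the correlation matrix of the columns, which constrains how large the diagonal scaling $D$ can get relative to the Gram structure; if this direct route is insufficient, the fallback is to enlarge the family of candidate $h_\ell$ (for instance to the orthogonal rank-$1$ sign-pattern tensors $\hset(\epsilon)$, $\epsilon\in\{-1,+1\}^{k-1}$, which form an orthogonal basis of $\R^{2^{k-1}}$) and pick the best $k$ via a Cauchy--Binet / pigeonhole argument on $\det R_{[S]}$.
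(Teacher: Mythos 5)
Your outer skeleton is sound and matches the paper's: for any $h\in\hset_1$ and the minimizing unit vector $v^\ast$, Cauchy--Schwarz gives $|h^\tpose\hset(\mm)v^\ast|\le\Norm{h}\,\sigma_k(\hset(\mm))$, and the definition of $\tau$ gives $\Norm{h^\tpose\hset(\mm)}\ge\tau(\mm)\Norm{h}$; the whole problem is to bridge these two facts. But the bridge you propose --- finding $k$ rank-one test tensors with $\sigma_k(R)\ge\tau(\mm)$ --- is a genuine gap, and for your candidate choice it is actually false. Knowing $\Norm{r_\ell}\ge\tau(\mm)$ for each row says nothing about $\sigma_k(R)$, since the rows can be nearly collinear. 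Concretely, with $h_\ell=c_\ell/\Norm{c_\ell}$ you get $R=D^{-1}G$ with $G=\hset(\mm)^\tpose\hset(\mm)$, and since every column satisfies $\Norm{c_\ell}\ge 1$ (its $\emptyset$-entry is $1$), one has $\sigma_k(R)\le\sigma_1(D^{-1})\,\sigma_k(G)\le\sigma_k(\hset(\mm))^2$. In the regime where the lemma has content, $\sigma_k(\hset(\mm))$ is exponentially small while $\tau(\mm)$ is of order $\sqrt{k}\,\sigma_k(\hset(\mm))$ at most (that is what the lemma asserts), so $\sigma_k(R)\le\sigma_k^2\ll\tau(\mm)$ and the needed inequality fails. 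Your first repair attempt, $\sigma_k(D^{-1}G)\ge\sigma_k(G)/\Norm{D}$, is circular: it would require $\sigma_k(\hset(\mm))^2\ge\tau(\mm)$, which is stronger than (and incompatible with) the statement being proved. The fallback of switching to the orthogonal sign-pattern tensors does not help either: an orthogonal change of left basis preserves $\sigma_k$ exactly, so you are back to needing a lower bound on $\sigma_k$, which is the quantity you are trying to bound.

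What is actually needed is not $k$ test tensors but \emph{one}, chosen so that $h^\tpose\hset(\mm)$ is forced to be aligned with $v^\ast$. The paper does this by reordering coordinates so that $|v^\ast_k|\ge 1/\sqrt{k}$ and taking the Lagrange-type product $h=(\mm_{11};-1)\odot\cdots\odot(\mm_{k-1,k-1};-1)\in\hset_1$, which satisfies $h^\tpose\hset(\mm)_{*j}=\prod_i(\mm_{ii}-\mm_{ij})=0$ for $j\le k-1$. Then $h^\tpose\hset(\mm)$ is supported on the single coordinate $k$, where its value is $v_k^{-1}h^\tpose r$ with $r=\hset(\mm)v^\ast$, so $\Norm{h^\tpose\hset(\mm)}\le\sqrt{k}\,\Norm{h}\,\Norm{r}$ and the lemma follows. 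Without some such alignment device, no averaging or pigeonhole over a fixed family of test tensors can work, because the hypothesis on $\tau$ constrains only the norms of the images $h^\tpose\hset(\mm)$, not their directions.
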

\begin{lemma} \label{lem:8} $\tau(\mm) > (\zeta/2\sqrt{5})^{k-1}$. \end{lemma}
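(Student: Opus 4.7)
The plan is to prove Lemma~8 by a direct geometric pigeonhole argument: for any nonzero $h \in \hset_1$, I will find a single column $j^* \in [k]$ whose squared contribution to $\|h^\tpose \hset(\mm)\|^2$ is already at least $(\zeta/(2\sqrt{5}))^{2(k-1)}\,\|h\|^2$, so that summing over $j$ only reinforces the bound. I will use that $m_{ij}\in[0,1]$, which is the relevant regime (see the definition of $\DD_{n,\zeta,\pimin}$).

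\emph{Parametrization and polar decomposition.} A nonzero $h\in \hset_1$ has the form $h_S=\prod_{i\in S} n_i$ for some $n_1,\ldots,n_{k-1}\in\R$ (with $h_\emptyset=1$), so
\[
\|h\|^2=\prod_{i=1}^{k-1}(1+n_i^2),\qquad (h^\tpose \hset(\mm))_j = \prod_{i=1}^{k-1}(1+n_i m_{i,j}).
\]
Recognizing each factor as a 2D inner product $\langle(1,n_i),(1,m_{i,j})\rangle$, and setting $\psi_i:=\arctan n_i$ and $\phi_{i,j}:=\arctan m_{i,j}$, we obtain $1+n_i m_{i,j}=\sqrt{1+n_i^2}\sqrt{1+m_{i,j}^2}\cos(\psi_i-\phi_{i,j})$. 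Multiplying over $i$ and dividing by $\|h\|^2$:
\[
\frac{(h^\tpose \hset(\mm))_j^2}{\|h\|^2} \;=\; \prod_{i=1}^{k-1}(1+m_{i,j}^2)\cos^2(\psi_i-\phi_{i,j}) \;\ge\; \prod_{i=1}^{k-1}\cos^2(\psi_i-\phi_{i,j}).
\]

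\emph{Pigeonhole on angles.} Because $m_{i,j}\in[0,1]$ and $\arctan'(x)=1/(1+x^2)\ge 1/2$ on $[0,1]$, for each fixed $i$ the angles $\{\phi_{i,j}\}_{j\in[k]}$ are $(\zeta/2)$-separated inside $[0,\pi/4]$. On the circle $\R/\pi\Z$ the zero set of $\cos(\psi_i-\cdot)$ is a single point, so at most one $j$ — call it $j_i^*$ — can have $\phi_{i,j_i^*}$ within $\zeta/4$ of that point (two such $j$'s would lie within $\zeta/2$ of each other, contradicting separation). For every $j\ne j_i^*$ we therefore have $|\cos(\psi_i-\phi_{i,j})|\ge \sin(\zeta/4)$. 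Since there are at most $k-1$ bad pairs $(i,j_i^*)$ but $k$ columns, some $j^*\in[k]$ is unspoiled by every $i$. For this $j^*$,
\[
\frac{\|h^\tpose \hset(\mm)\|^2}{\|h\|^2} \;\ge\; \prod_{i=1}^{k-1}\cos^2(\psi_i-\phi_{i,j^*}) \;\ge\; \sin(\zeta/4)^{2(k-1)} \;>\; \left(\frac{\zeta}{2\sqrt{5}}\right)^{2(k-1)},
\]
where the last inequality uses $\sin(x)>(2/\sqrt{5})\,x$ on $(0,\pi/4]$, a consequence of the concavity of $\sin$ on $[0,\pi]$ combined with the elementary check $\sin(\pi/4)/(\pi/4)=2\sqrt{2}/\pi>2/\sqrt{5}$. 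Taking square roots proves the lemma.

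\emph{Main obstacle.} The only delicate point is the calibration of the ``spoiling radius'' $\zeta/4$: a smaller radius fails the pigeonhole step (two $\phi_{i,j}$'s could lie inside it, given only $\zeta/2$-separation), while a larger radius degrades $\sin(\text{radius})$ below the target $\zeta/(2\sqrt{5})$. The value $\zeta/4$ is exactly the balance point, which produces the constant $2\sqrt{5}$ in the theorem statement.
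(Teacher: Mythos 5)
Your proof is correct and follows essentially the same strategy as the paper's: write $(h^\tpose\hset(\mm))_j/\Norm{h}=\prod_{i}\bigl(1+n_i\mm_{ij}\bigr)/\sqrt{1+n_i^2}$, exclude at most one column per row $i$, and pigeonhole the $k$ columns against the $k-1$ rows to find a column on which every factor is at least $\zeta/(2\sqrt{5})$. The only difference is in how each factor is bounded: you use the identity $1+n_i\mm_{ij}=\sqrt{1+n_i^2}\,\sqrt{1+\mm_{ij}^2}\,\cos(\arctan n_i-\arctan\mm_{ij})$ together with $\sin x\ge(2\sqrt{2}/\pi)\,x>(2/\sqrt{5})\,x$ on $(0,\pi/4]$, whereas the paper runs a three-case analysis on the size of $n_i$; both routes yield the identical constant (just take ``within $\zeta/4$'' to mean strict inequality, so that two spoiled columns would be at angular distance $<\zeta/2$, which is what actually contradicts the $\zeta/2$-separation of the $\arctan\mm_{ij}$).
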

\begin{proofof}{Lemma~\ref{lem:7}}
Consider $v\in \rset^k$, $\|v\|=1$, achieving $\sigma_k(\hset(\mm))$, i.e.,
$r\coloneqq \hset(\mm) \cdot v$ satisfies 
$\Norm{r}=\sigma_k(\hset(\mm))$. W.l.o.g.\ the order of coordinates is such that $|v_k|\geq 1/\sqrt{k}$. 
The last column of $\hset(\mm)$ is then:
\be \hset(\mm)_{*k}=\frac{1}{v_k} \Paren{r-\sum_{j=1}^{k-1} v_j \hset(\mm)_{*j}}. \label{Hmk} \ee
 
Now we carefully choose $h \in \hset_1$ based on $\mm$. Define the column vector $\nn \in \rset^{[k-1]}$ by $\nn_i\coloneqq-1/\mm_{ii}$ ($1 \leq i\leq k-1$); and let 
\be h \coloneqq \Paren{\prod_1^{k-1} \mm_{ii}} \hset(\nn)
= (\mm_{11}; -1) \odot \ldots \odot (\mm_{k-1,k-1}; -1). \label{hselect} \ee

For $j \neq k$ we have:
\begin{eqnarray*}
h^\tpose \cdot \hset(\mm)_{*j} & = &  \sum_S (-1)^{|S|} \Paren{\prod_{i \notin S} \mm_{ii}} \Paren{\prod_{i \in S} \mm_{ij}}
 =  \prod_{i=1}^{k-1} (\mm_{ii}-\mm_{ij}) = 0. \end{eqnarray*}
So, $h^\tpose \cdot \hset(\mm)_{*j} = 0$ for $j = 1,\dotsc,k-1$. 
For $j=k$, we apply~\eqref{Hmk} to evaluate $ h^\tpose \cdot \hset(\mm)_{*k}$:
\[ \Paren{h^\tpose \cdot \hset(\mm)}_k
 =  \frac{1}{v_k} h^\tpose \cdot \Paren{r-\sum_{j=1}^{k-1} v_j \hset(\mm)_{*j}}
 =  \frac{1}{v_k} \left(h^\tpose \cdot r - \sum_{j=1}^{k-1} v_j h^\tpose \cdot \hset(\mm)_{*j}\right)
 =  \frac{1}{v_k} h^\tpose \cdot r. \]
The norm of $h^\tpose \cdot \hset(\mm)$ is then upper bounded by
\begin{align*}
 \Norm{h^\tpose \cdot \hset(\mm)}
& =  \Abs{(h^\tpose \cdot \hset(\mm))_k} \\ 
&= \frac{1}{v_k}h^\tpose \cdot r \\
& \leq \sqrt{k} \Norm{h} \Norm{r} \\
& = \sqrt{k}\Norm{h}\sigma_k(\hset(\mm)).
\end{align*}
\end{proofof}

\begin{proofof}{Lemma~\ref{lem:8}}
Consider any $\GG \in \hset_1$, say $\GG = \hset(\gb)$, $\gb\in\rset^{[k-1]}$. Then
$(\GG^\tpose \cdot \hset(\mm))_j
 = \sum_S \GG_S \hset(\mm)_{S,j}
= \prod_1^{k-1} (1+\gb_i \mm_{i,j})$.
We also note that $\|\GG\|=\sqrt{\prod_1^{k-1} (1+\gb_i^2)}$.
    
We now show that there is some $j$ such that $\prod_{i=1}^{k-1} \Abs{ \frac{1+\gb_i \mm_{ij}}{\sqrt{1+\gb_i^2}} }$ is large.
First, for any $i$ for which $\gb_i\geq \frac 1 2$, there is at most one $j$ s.t.\ $\mm_{ij}\leq \zeta$; exclude these $j$'s. Next, for each $i$ for which $\gb_i<\frac 1 2$, there is at most one $j$ s.t.\ $\Abs{\frac{1}{\gb_i}+\mm_{ij}}\leq \zeta/2$; exclude these
$j$'s. For the remainder of the argument fix any $j$ which has not been excluded. Since $\mm$ has $k$ columns while $\gb \in \rset^{k-1}$, such a $j$ exists. 
We now lower bound $\Abs{ \frac{1+\gb_i \mm_{ij}}{\sqrt{1+\gb_i^2}}} $ for each $i$; there are three cases. 
\begin{enumerate}[nosep]
\item $\gb_i\geq 1/2, \mm_{ij} > \zeta$. Then $\Abs{\frac{1+\gb_i \mm_{ij}}{\sqrt{1+\gb_i^2}}} \geq \mm_{ij} > \zeta$.
\item $-1/2 < \gb_i< 1/2$. Then $\Abs{\frac{1+\gb_i \mm_{ij}}{\sqrt{1+\gb_i^2}}} > \sqrt{\frac{(\mm_{ij}-2)^2}{5}} \geq 1/\sqrt{5}$.
\item $\gb_i \leq -1/2, |\frac1{\gb_i}+\mm_{ij}|> \zeta/2$. Then $\left| \frac{1+\gb_i \mm_{ij}}{\sqrt{1+\gb_i^2}}\right| =\left| \frac{\gb_i (\frac1{\gb_i}+\mm_{ij})}{\gb_i \sqrt{1+1/\gb_i^2}}\right|
  > \frac{\zeta}{2\sqrt{5}}$.
\end{enumerate}
We therefore have $\tau(\mm) > (\zeta/2\sqrt{5})^{k-1}$.
\end{proofof}

Part~\ref{sing1} is an immediate consequence of the two lemmas. \end{proofof}

\section{Analysis of the algorithm} \label{apx: analysis}
To begin with, we will need the following result on the numerical accuracy of computing the SVD:

\begin{lemma}[\cite{GvL-4}, section 5.4.1]\label{lem:svd}
    Given a matrix $\mathbf{A} \in \R^{m \times n}$ and precision $\eps > 0$, the Golub-Kahan-Reinsch algorithm computes an approximate SVD given by $\UU \in \R^{m \times m}, \mathbf{\Sigma} \in \R^{m \times n}, \VV \in \R^{n \times n}$ such that 
    \begin{itemize}
        \item $\mathbf{\Sigma}$ is a diagonal matrix;
        \item $\UU = \WW + \mathbf{\Delta U}$ and $\VV = \ZZ + \mathbf{\Delta V}$, where $\WW, \ZZ$ are unitary and $||\mathbf{\Delta U}||, ||\mathbf{\Delta V}|| < \eps$;
        \item $\WW \bSigma \ZZ^\tpose = \mathbf{A} + \mathbf{\Delta A}$ with $||\mathbf{\Delta A}|| < \eps ||\mathbf{A}||$.
    \end{itemize}
\end{lemma}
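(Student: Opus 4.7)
The plan is to follow the standard two-phase backward-error analysis on which Golub and Van Loan's treatment rests. The Golub-Kahan-Reinsch algorithm first reduces $\mathbf{A}$ to upper bidiagonal form by a finite sequence of Householder reflections, and then diagonalizes that bidiagonal matrix by implicit-shift QR sweeps using Givens rotations. Each phase is backward stable because every elementary step is implemented by applying an orthogonal transformation, and orthogonal transformations do not amplify floating-point rounding errors in the $L_2$ operator norm; this is the pivot around which the whole argument turns.

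For phase one, I would show that the computed bidiagonal $\mathbf{B}$ satisfies $\mathbf{B} = \WW_1^\tpose (\mathbf{A} + \mathbf{E}_1) \ZZ_1$, where $\WW_1$ and $\ZZ_1$ are products of \emph{exact} Householder reflectors close to the ones actually computed, and $\|\mathbf{E}_1\|$ is bounded by a small multiple (polynomial in $m,n$) of the working precision $u$ times $\|\mathbf{A}\|$. The key per-step lemma is the classical estimate that applying a floating-point Householder reflection to a vector $\mathbf{v}$ produces a result identical to applying an exact reflection to a perturbed vector $\mathbf{v}+\delta\mathbf{v}$ with $\|\delta\mathbf{v}\| = O(u\|\mathbf{v}\|)$. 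Iterating this $O(\min(m,n))$ times and using unitary invariance of the operator norm to keep error propagation under control yields the overall bound.

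For phase two, the same style of analysis applies to the Givens sweeps operating on $\mathbf{B}$. The nontrivial ingredient is convergence: with Wilkinson-type shifts, the implicit-shift algorithm converges (asymptotically cubically), so a polynomial number of sweeps suffice to drive all off-diagonal entries of the bidiagonal iterate below the deflation threshold, yielding $\bSigma = \WW_2^\tpose (\mathbf{B} + \mathbf{E}_2) \ZZ_2$ with $\|\mathbf{E}_2\|$ bounded by a polynomial multiple of $u\|\mathbf{B}\| = O(u\|\mathbf{A}\|)$. Setting $\WW = \WW_1\WW_2$ and $\ZZ = \ZZ_1\ZZ_2$ (exactly orthogonal as products of exact orthogonal factors), one obtains $\WW \bSigma \ZZ^\tpose = \mathbf{A} + \mathbf{\Delta A}$ with $\|\mathbf{\Delta A}\| \le \eps\|\mathbf{A}\|$, provided the working precision $u$ is chosen as $\eps$ divided by a sufficiently large polynomial in $m,n$. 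The returned $\UU$ and $\VV$ are the floating-point accumulations of the computed reflectors and rotations; writing $\UU = \WW + \mathbf{\Delta U}$ and $\VV = \ZZ + \mathbf{\Delta V}$, a further standard propagation bound (again using that each step is a near-orthogonal operation) gives $\|\mathbf{\Delta U}\|,\|\mathbf{\Delta V}\| < \eps$.

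The main obstacle is the phase-two analysis: proving that only polynomially many implicit QR sweeps are needed, while the per-sweep backward error remains controlled, requires the full convergence theory of the implicit-shift bidiagonal algorithm together with a careful bookkeeping of accumulated rounding errors across sweeps. This is classical and carried out in detail in \cite{GvL-4}; since it is orthogonal to the contributions of the present paper, I would simply invoke the theorem from that reference and record the statement in the form used here.
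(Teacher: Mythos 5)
The paper does not prove this lemma at all---it is quoted verbatim as a known backward-stability result from Golub and Van Loan---and your proposal, which sketches the standard two-phase analysis (Householder bidiagonalization plus implicit-shift QR, with backward stability from orthogonal invariance) and then explicitly defers the technical details, in particular the sweep-count bound, to that same reference, is consistent with and essentially identical in approach to what the paper does. Your sketch of the argument is an accurate summary of the classical analysis, so there is nothing to correct.
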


To analyze Algorithm~\ref{alg:mp algo}, let $\ds = d_\stat(\emom,\mom) = \max\{||\eCC_{ST} - \CC_{ST}||_\infty, ||\eCC_{ST, 1} - \CC_{ST, 1}||_\infty\}$ as in Definition~\eqref{defn:ds}. Furthermore, assume that the SVD in line~\ref{alg:SVD} of the algorithm is calculated with precision $\eps = \ds$ according to the statement of Lemma~\ref{lem:svd}. Then, we get the following bounds: 

\begin{lemma}\label{lem:interbds}
    \begin{enumerate}
        \item[(a)] $\sigma_k(\eCC_{ST}) \geq \pimin \bsig^2 - 2^k \ds $
        \item[(b)] $\sigma_k(\hCC_{ST}) \geq \pimin \bsig^2 - 2^{k+2} \ds$
    \end{enumerate}
\end{lemma}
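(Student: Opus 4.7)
The plan for (a) is a direct application of Weyl's inequality for singular values. Each entry of $\eCC_{ST} - \CC_{ST}$ equals $\emom(A\cup B) - \mom(A\cup B)$ and so is bounded by $\ds$ in absolute value; the matrix is $2^{k-1}\times 2^{k-1}$, so its Frobenius (and hence operator) norm is at most $2^{k-1}\ds$. Combined with Theorem~\ref{thm: singular}(\ref{sing2}), which gives $\sigma_k(\CC_{ST}) \ge \pimin\bsig^2$, Weyl's inequality yields $\sigma_k(\eCC_{ST}) \ge \pimin\bsig^2 - 2^{k-1}\ds \ge \pimin\bsig^2 - 2^k\ds$.

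For (b) the strategy is to reduce to (a) modulo numerical SVD error. Lemma~\ref{lem:svd} with precision $\eps=\ds$ furnishes unitary matrices $\WW,\ZZ$ and a diagonal $\bSigma$ such that the computed $\hUU$ and $\hVV$ satisfy $\hUU = \WW_k + \Delta\hUU_k$ and $\hVV = \ZZ_k + \Delta\hVV_k$, with $\WW_k,\ZZ_k$ the leading $k$ columns of $\WW,\ZZ$ and $\|\Delta\hUU_k\|,\|\Delta\hVV_k\| \le \ds$; moreover $\WW\bSigma\ZZ^\tpose = \eCC_{ST} + \Delta\eCC_{ST}$ with $\|\Delta\eCC_{ST}\| \le \ds\|\eCC_{ST}\| \le 2^{k-1}\ds$ (using $\|\eCC_{ST}\|\le\|\eCC_{ST}\|_F\le 2^{k-1}$ since all entries are bounded by about $1$).

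Expanding
\[ \hCC_{ST} = (\WW_k + \Delta\hUU_k)^\tpose\,\eCC_{ST}\,(\ZZ_k + \Delta\hVV_k) \]
and substituting $\WW_k^\tpose\eCC_{ST}\ZZ_k = \bSigma_k - \WW_k^\tpose\Delta\eCC_{ST}\ZZ_k$, where $\bSigma_k$ denotes the top-left $k\times k$ block of $\bSigma$ (a diagonal matrix whose entries are the top $k$ singular values of $\eCC_{ST}+\Delta\eCC_{ST}$), writes $\hCC_{ST} = \bSigma_k + E$. The operator-norm triangle inequality on the four cross terms, using $\|\WW_k\|,\|\ZZ_k\|\le 1$ and $\|\eCC_{ST}\|\le 2^{k-1}$, gives $\|E\| \le 3\cdot 2^{k-1}\ds + O(\ds^2) \le 2^{k+1}\ds$. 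Two more invocations of Weyl then close the bound: $\sigma_k(\bSigma_k) = \sigma_k(\eCC_{ST}+\Delta\eCC_{ST}) \ge \sigma_k(\eCC_{ST}) - \|\Delta\eCC_{ST}\| \ge \pimin\bsig^2 - 2^k\ds$ (applying part (a)), and then $\sigma_k(\hCC_{ST}) \ge \sigma_k(\bSigma_k) - \|E\| \ge \pimin\bsig^2 - 2^{k+2}\ds$, as claimed.

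Conceptually the argument is just two applications of Weyl's inequality sandwiching the expansion of $\hCC_{ST}$ around $\bSigma_k$. The only fussy part is bookkeeping: tracking all four cross terms arising from substituting $\hUU = \WW_k + \Delta\hUU_k$ and $\hVV = \ZZ_k + \Delta\hVV_k$, uniformly using the crude bound $\|\eCC_{ST}\|\le 2^{k-1}$, and checking that the linear-in-$\ds$ coefficients sum to at most $2^{k+2}$ once the $O(\ds^2)$ terms are absorbed (which is safe under the standing assumption that $\ds$ is small). No deeper perturbation tool (Wedin, Davis--Kahan, etc.) is needed because we never try to compare $\hUU$ to the \emph{true} top singular vectors of $\eCC_{ST}$; instead we compare against the vectors $\WW_k,\ZZ_k$ that Lemma~\ref{lem:svd} already hands us for free.
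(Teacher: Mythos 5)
Your proof is correct and follows essentially the same route as the paper: Weyl's singular-value perturbation bound for (a), and for (b) an expansion of $\hCC_{ST}$ around the computed SVD factors from Lemma~\ref{lem:svd}, using that $\hWW^\tpose(\eCC_{ST}+\bDelta\eCC_{ST})\hZZ$ is exactly the top-left $k\times k$ block of $\bSigma$ (so its $k$-th singular value is $\sigma_k(\eCC_{ST}+\bDelta\eCC_{ST})$), followed by triangle-inequality bookkeeping of the cross terms. Your constants are slightly tighter in places (e.g.\ $2^{k-1}\ds$ versus the paper's $2^{k}\ds$ for the entrywise-to-operator-norm step) but the argument is the same.
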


\begin{proof}
    (a) By a classical perturbation bound of Weyl~\cite{Weyl12}, we have $|\sigma_k(\eCC_{ST}) - \sigma_k(\CC_{ST})| \leq ||\eCC_{ST} - \CC_{ST}||$. Moreover, we know $||\eCC_{ST} - \CC_{ST}|| \leq 2^k ||\eCC_{ST} - \CC_{ST}||_\infty \leq 2^k\ds$. Combining this with Part~\ref{sing2} of Theorem~\ref{thm: singular}, we get the desired result. 
    
    (b) Let $\UU \bSigma \VV^\tpose$ be an approximate SVD for $\eCC_{ST}$ satisfying the conditions of Lemma~\ref{lem:svd} with precision $\eps$. In particular, $\UU = \WW + \mathbf{\Delta U}$, $\VV = \ZZ + \mathbf{\Delta V}$ and $\WW \bSigma \ZZ^\tpose = \eCC_{ST} + \mathbf{\Delta} \eCC_{ST}$. We assume that the columns of $\bSigma$ are ordered by magnitude of their diagonal entries. Let $\hUU, \hVV, \hWW, \hZZ, \bDelta \hUU, \bDelta \hVV$ denote the first $k$ columns of the respective matrices. We get 
    \begin{align*}
        \sigma_k(\hCC_{ST}) &= \sigma_k(\hUU^\tpose \eCC_{ST} \hVV)
        = \sigma_k((\hWW + \bDelta \hUU)^\tpose \eCC_{ST} (\hZZ + \bDelta \hVV))\\
        &\geq \sigma_k(\hWW^\tpose \eCC_{ST} \hZZ) - ||(\bDelta \hUU)^\tpose \eCC_{ST} \hZZ|| - ||\hWW^\tpose \eCC_{ST} \bDelta \hVV|| - ||(\bDelta \hUU)^\tpose \eCC_{ST} \bDelta \hVV||\\
        &\geq \sigma_k(\hWW^\tpose \eCC_{ST} \hZZ) - 3 \eps ||\eCC_{ST}|| \\
        &\geq \sigma_k(\hWW^\tpose (\eCC_{ST} + \bDelta \eCC_{ST}) \hZZ) - ||\hWW^\tpose \bDelta \eCC_{ST} \hVV|| - 3 \eps ||\eCC_{ST}||\\
        &\geq \sigma_k((\eCC_{ST} + \bDelta \eCC_{ST})) - 4 \eps ||\eCC_{ST}||\\
        &\geq \sigma_k(\eCC_{ST}) - 5\eps ||\eCC_{ST}||.
    \end{align*}
    Since the entries of $\eCC_{ST}$ are all at most $1$, we have $||\eCC_{ST}|| \leq 2^{k-1}$ and with $\eps = \ds$, the result follows. 
\end{proof}

At this point, we can view $\hCC_{ST}$ and $\hCC_{ST,1}$ as the two slices of a $k \times k \times 2$-tensor $\hat{\Talg}$ that is close to the tensor $\Talg = [\hUU^\tpose \hset(\mm[S]), \hVV^\tpose \hset(\mm[T]), \hset(\mm_1) \cdot \pi_\odot]$ (with the two slices $\Talg(:,:,0) = \hUU^\tpose \CC_{ST} \hVV$ and $\Talg(:,:,1) = \hUU^\tpose \CC_{ST, 1} \hVV$). The following lemma bounds the distance between $\hat{\Talg}$ and $\Talg$, and shows that the first two components of $\Talg$ are full rank and well-conditioned.

\begin{lemma} \label{lem:req}
    Suppose that $\ds \leq \pimin \bsig^2 / (k2^{2k+2})$, then 
    \begin{enumerate}
        \item[(a)] $||\hCC_{ST} - \hUU^\tpose \CC_{ST} \hVV||_\infty, ||\hCC_{ST, 1} - \hUU^\tpose \CC_{ST, 1} \hVV||_\infty < 2^{2k} \ds$
        \item[(b)] $\kappa(\hUU^\tpose \hset(\mm[S])), \kappa(\hVV^\tpose \hset(\mm[T])) \leq k^22^{2k+1}/(\pimin \bsig^2). $
    \end{enumerate}
\end{lemma}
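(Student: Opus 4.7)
The plan is to handle the two parts in order, using part (a) and the already-proved Lemma~\ref{lem:interbds} to bootstrap part (b).

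For part (a), I would write the error matrix as $\hCC_{ST} - \hUU^\tpose \CC_{ST} \hVV = \hUU^\tpose(\eCC_{ST} - \CC_{ST})\hVV$, and bound each entry directly. By Lemma~\ref{lem:svd} applied with precision $\eps = \ds$, the columns $\hUU_{*i}, \hVV_{*j}$ differ from orthonormal columns of unitaries $\hWW, \hZZ$ by at most $\ds$ in norm, so $\|\hUU_{*i}\|, \|\hVV_{*j}\| \le 1 + \ds \le 2$. A matrix with entries bounded by $\ds$ in a $2^{k-1} \times 2^{k-1}$ block has operator norm at most $2^{k-1}\ds$, so
\[
\bigl|\bigl(\hUU^\tpose(\eCC_{ST} - \CC_{ST})\hVV\bigr)_{i,j}\bigr|
\le \|\hUU_{*i}\| \cdot \|\eCC_{ST} - \CC_{ST}\|_{\mathrm{op}} \cdot \|\hVV_{*j}\|
\le 4 \cdot 2^{k-1} \ds,
\]
which is well under $2^{2k}\ds$. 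The argument for $\hCC_{ST,1}$ is identical.

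For part (b), the key identity is
\[
\hUU^\tpose \CC_{ST} \hVV
= (\hUU^\tpose \hset(\mm[S])) \cdot \pi_\odot \cdot (\hset(\mm[T])^\tpose \hVV),
\]
a product of three $k \times k$ matrices. To extract a lower bound on $\sigma_k$ of the first factor, I would use the elementary fact that for invertible $k \times k$ matrices $Y, Z$,
\[
\sigma_k(X) \ge \sigma_k(XYZ) / (\|Y\| \cdot \|Z\|),
\]
which follows from writing $X = (XYZ)(YZ)^{-1}$ and using $\|(YZ)^{-1}\| \le \|Y^{-1}\|\|Z^{-1}\|$. Applying this to the triple product above gives a lower bound on $\sigma_k(\hUU^\tpose \hset(\mm[S]))$ in terms of $\sigma_k(\hUU^\tpose \CC_{ST} \hVV)$, $\|\pi_\odot\| \le 1$, and $\|\hset(\mm[T])^\tpose \hVV\| \le \|\hset(\mm[T])\|_F \cdot \|\hVV\|_{\mathrm{op}} \le \sqrt{k \cdot 2^{k-1}} \cdot 2$.

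To lower bound $\sigma_k(\hUU^\tpose \CC_{ST} \hVV)$, I would combine Weyl's perturbation inequality with part (a) (noting the matrix is $k \times k$, so $\|\cdot\|_{\mathrm{op}} \le k \|\cdot\|_\infty$):
\[
\sigma_k(\hUU^\tpose \CC_{ST} \hVV)
\ge \sigma_k(\hCC_{ST}) - k \cdot 2^{2k}\ds
\ge \pimin \bsig^2 - 2^{k+2}\ds - k \cdot 2^{2k}\ds,
\]
where the second inequality is Lemma~\ref{lem:interbds}(b). The hypothesis $\ds \le \pimin \bsig^2 / (k 2^{2k+2})$ makes the two subtracted terms at most $\pimin \bsig^2 / 4$ in total, so $\sigma_k(\hUU^\tpose \CC_{ST} \hVV) \ge \pimin \bsig^2 / 2$. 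For the upper bound on $\sigma_1(\hUU^\tpose \hset(\mm[S]))$, I simply use submultiplicativity $\sigma_1 \le \|\hUU\|_{\mathrm{op}} \|\hset(\mm[S])\|_F \le 2\sqrt{k 2^{k-1}}$. Taking the ratio yields a condition number bound of order $k \cdot 2^{k+2}/(\pimin \bsig^2)$, comfortably inside the claimed $k^2 2^{2k+1}/(\pimin \bsig^2)$.

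The only delicate point is getting the direction of the singular-value inequality right for the triple product; the rest is routine perturbation and operator-norm bookkeeping, with generous slack absorbed into the stated constants.
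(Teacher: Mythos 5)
Your proposal is correct and follows essentially the same route as the paper: part (a) via the identity $\hCC_{ST} - \hUU^\tpose\CC_{ST}\hVV = \hUU^\tpose(\eCC_{ST}-\CC_{ST})\hVV$ with norm bookkeeping, and part (b) by lower-bounding $\sigma_k(\hUU^\tpose\CC_{ST}\hVV)$ via Weyl's inequality, part (a), and Lemma~\ref{lem:interbds}, then peeling $\pi_\odot$ and $\hset(\mm[T])^\tpose\hVV$ off the triple product to bound $\sigma_k(\hUU^\tpose\hset(\mm[S]))$ and taking the ratio with $\sigma_1$. The only differences are cosmetic: you use operator-norm bounds where the paper uses entrywise ones, and you derive $\sigma_k(X)\ge \sigma_k(XYZ)/(\Norm{Y}\,\Norm{Z})$ via $X=(XYZ)(YZ)^{-1}$ rather than invoking $\sigma_k(XYZ)\le\sigma_k(X)\sigma_1(Y)\sigma_1(Z)$ directly.
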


\begin{proof}
    (a) Remember that $\hUU = \hWW + \bDelta \hUU, \hVV = \hZZ + \bDelta \hVV$, where $||\bDelta \hUU||, ||\bDelta \hVV|| < \ds$ and the columns of $\hWW, \hZZ$ are orthonormal. In particular, this implies that entries of $\hUU$ and $\hVV$ are bounded by $1 + k \ds \leq 2$. Hence, we have 
    \begin{align*}
        &||\hCC_{ST} - \hUU^\tpose \CC_{ST} \hVV||_\infty = ||\hUU^\tpose(\eCC_{ST} - \CC_{ST})\hVV||_\infty\\
        \leq &||\hUU^\tpose||_\infty \cdot 2^{k-1} \cdot ||\eCC_{ST} - \CC_{ST}||_\infty \cdot ||\hVV||_\infty \cdot 2^{k-1} \leq 2^{2k}\ds.
    \end{align*}
    The result follows analogously for $\hCC_{ST, 1}$.
    
    (b)  First, since the entries of $\hset(\mm[S])$ are bounded by $1$, we have $\sigma_1(\hUU^\tpose \hset(\mm[S])) \leq k \cdot ||\hUU^\tpose \hset(\mm[S])||_\infty \leq k2^k$, and similarly, $\sigma_1(\hset(\mm[T])^\tpose \hVV) \leq k2^k$. Using part (a), Lemma~\ref{lem:interbds}, and the assumption on $\ds$, we get
    \begin{align*}
        \sigma_k(\hUU^\tpose \CC_{ST} \hVV) &\geq \sigma_k(\hCC_{ST}) - ||\hCC_{ST} -  \hUU^\tpose \CC_{ST} \hVV|| \geq \sigma_k(\hCC_{ST}) - k2^{2k} \ds\\
        &\geq \pimin \bsig^2 - 2^{k+2}\ds - k2^{2k} \ds \geq \pimin \bsig^2 / 2
    \end{align*}
    Hence, we deduce
    \begin{align*}
        \pimin \bsig^2/2 &\leq \sigma_k(\hUU^\tpose \hset(\mm[S]) \pi_\odot \hset(\mm[T])^\tpose \hVV)\\
        &\leq \sigma_k(\hUU^\tpose \hset(\mm[S])) \cdot \sigma_1(\pi_\odot) \cdot \sigma_1(\hset(\mm[T])^\tpose \hVV)\\
        &\leq \sigma_k(\hUU^\tpose \hset(\mm[S])) \cdot ^\cdot k2^k.
    \end{align*}
    We conclude that $\kappa(\hUU^\tpose \hset(\mm[S])) = \sigma_1(\hUU^\tpose \hset(\mm[S]))/\sigma_k(\hUU^\tpose \hset(\mm[S])) \leq 2 \cdot (k2^k)^2 / (\pimin \bsig^2)$, and the result follows analogously for $\kappa(\hVV^\tpose \hset(\mm[T]))$.
\end{proof}

The following result from~\cite{Bhaskara14} provides us with error bounds for the core step of the tensor decomposition algorithm we are using. 

\begin{theorem}[\cite{Bhaskara14}, Theorem 2.3] \label{thm:decompose}
Let $\eps > 0$ and $\Talg, \hat{\Talg}$ be two $k \times k \times 2$-tensors, such that 
\begin{itemize}
    \item $\Talg = [\XX, \YY, \ZZ]$ with $\XX, \YY \in \R^{k \times k}, \ZZ \in \R^{2 \times k}$;
    \item $\kappa(\XX), \kappa(\YY) \leq \kappa$;
    \item the entries of $(\ZZ_{1k}\ZZ_{2k}^{-1})_k$ are $\zeta$-separated;
    \item $||\XX_i||_2, ||\YY_i||_2, ||\ZZ_i||_2$ are bounded by a constant;
    \item $||\hat{\Talg} - \Talg||_\infty < \eps \cdot \poly(1/\kappa, 1/k, \zeta)$;
\end{itemize}
then the eigenvectors of $\Talg(:,:,1) \Talg(:,:,0)^{-1}$ and $\Talg(:,:,1)^\tpose (\Talg(:,:,0)^\tpose)^{-1}$ approximate the columns $\XX_i$ and $\YY_i$ respectively, up to permutation and additive error $\eps$.
\end{theorem}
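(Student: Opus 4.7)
\textbf{Proof plan for Theorem~\ref{thm:decompose}.}

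The plan is to carry out a non-symmetric eigenvector perturbation argument on the matrix pencil $\Talg(:,:,1)\Talg(:,:,0)^{-1}$. First I would exploit the factorization structure. Writing $\ZZ_{1}$ and $\ZZ_2$ for the two rows of $\ZZ$, each slice factors as
\[
\Talg(:,:,c) = \XX \cdot \diag(\ZZ_{c+1,\cdot}) \cdot \YY^\tpose, \qquad c\in\{0,1\},
\]
so that provided every entry of $\ZZ_{1}$ is bounded away from $0$ (which follows from the $\zeta$-separation hypothesis on the ratios after a harmless rescaling of $\ZZ$), the matrix $\Talg(:,:,0)$ is invertible and
\[
M \;:=\; \Talg(:,:,1)\Talg(:,:,0)^{-1} \;=\; \XX \cdot D \cdot \XX^{-1}, \qquad D=\diag\!\left(\ZZ_{2,j}/\ZZ_{1,j}\right)_{j=1}^{k}.
\]
Thus $M$ is diagonalizable with eigenvalues $\zeta$-separated (the hypothesis on $\ZZ_{1k}\ZZ_{2k}^{-1}$), and its eigenvectors are exactly the columns of $\XX$. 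An entirely analogous computation on $\Talg(:,:,1)^\tpose(\Talg(:,:,0)^\tpose)^{-1}$ yields a matrix whose eigenvectors are the columns of $\YY$, so it suffices to analyze one of the two.

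Next I would control the perturbation of $M$ itself. Using the identity $A^{-1}-B^{-1}=A^{-1}(B-A)B^{-1}$ together with a Weyl-type bound $\sigma_k(\hat{\Talg}(:,:,0))\geq \sigma_k(\Talg(:,:,0)) - \|\hat{\Talg}(:,:,0)-\Talg(:,:,0)\|$, one gets that
\[
\|\hat M - M\| \;\leq\; C\cdot\bigl(\|\hat\Talg-\Talg\|\bigr)\cdot \kappa(\Talg(:,:,0)) \cdot \sigma_k(\Talg(:,:,0))^{-1},
\]
provided $\|\hat\Talg-\Talg\|\ll \sigma_k(\Talg(:,:,0))$; and $\sigma_k(\Talg(:,:,0))$ is itself bounded below in terms of $1/\kappa$ and $\min_j|\ZZ_{1,j}|$ via the factorization above. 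Choosing $\eps\cdot\poly(1/\kappa,1/k,\zeta)$ as the input tolerance guarantees this regime and yields $\|\hat M - M\|\leq \eps\cdot\poly(1/\kappa,1/k,\zeta)$ with the polynomial adjusted.

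The main obstacle is the final step: a quantitative non-symmetric eigenvector perturbation bound. Davis--Kahan is unavailable since $M$ is not Hermitian, so I would work in the eigenbasis of $M$. Conjugating by $\XX$, the problem becomes: perturb a diagonal matrix $D$ with gap $\geq \zeta$ by an off-diagonal error $E := \XX^{-1}(\hat M - M)\XX$ of norm at most $\kappa(\XX)\|\hat M - M\|$, and bound how far the eigenvectors of $D+E$ move from the standard basis. For each eigenvalue $\lambda_i$ of $D$, writing the perturbed eigenvector as $e_i + w_i$ with $w_i$ supported off coordinate $i$, the eigenvalue equation $(D+E)(e_i+w_i)=(\lambda_i+\delta_i)(e_i+w_i)$ gives, to leading order, $(D-\lambda_i I)\,w_i = -E e_i + O(\|E\|\|w_i\|)$; since $D-\lambda_i I$ is invertible off the $i$-th coordinate with norm at most $1/\zeta$ (by the $\zeta$-separation of eigenvalues), one gets $\|w_i\|\lesssim \|E\|/\zeta$, hence
\[
\|\hat{\XX}_{\rho(i)}-\XX_i\|\;\lesssim\;\frac{\kappa(\XX)^{2}\|\hat M-M\|}{\zeta}
\]
after translating back through $\XX$ and matching up columns via the permutation $\rho$ induced by sorting eigenvalues (which is well-defined since the $\zeta$ gap dominates any drift in eigenvalues by a Bauer--Fike argument). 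Chasing all the polynomial factors into the single symbol $\poly(1/\kappa,1/k,\zeta)$ in the hypothesis then yields the $\eps$-approximation of the columns $\XX_i$, and symmetrically of $\YY_i$, completing the proof.
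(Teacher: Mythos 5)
The paper does not prove this statement: it is imported verbatim (in a slightly specialized form) from Bhaskara--Charikar--Moitra--Vijayaraghavan, with only the remark that the specialized form ``easily follows from the general statement.'' Your sketch is essentially a reconstruction of that reference's argument --- the Leurgans-style reduction of the two slices to a similarity $M=\Talg(:,:,1)\Talg(:,:,0)^{-1}=\XX D\XX^{-1}$ with $D=\diag(\ZZ_{2,j}/\ZZ_{1,j})$, followed by a non-Hermitian eigenvector perturbation bound carried out in the $\XX$-eigenbasis --- and the outline is correct. Note that you are (correctly) reading ``the eigenvectors of $\Talg(:,:,1)\Talg(:,:,0)^{-1}$'' as referring to the \emph{perturbed} pencil $\hat{\Talg}(:,:,1)\hat{\Talg}(:,:,0)^{-1}$; as literally written the conclusion would be vacuous.

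Three points in your plan need more care than you give them. First, invertibility of $\diag(\ZZ_{1,\cdot})$ and the passage from separation of the stated ratios to a quantitative gap for the eigenvalues $\ZZ_{2,j}/\ZZ_{1,j}$ both require a \emph{lower} bound on $\min_j|\ZZ_{1,j}|$, which is not among the listed hypotheses; it is supplied by the application (there $\ZZ_{1,j}=\pi_j\ge\pimin$), and the $\poly$ tolerance must be allowed to depend on it. Second, your leading-order eigenvalue equation $(D-\lambda_i I)w_i=-Ee_i+O(\|E\|\,\|w_i\|)$ presupposes that $\hat M$ has an eigenvector whose $i$-th coordinate (in the $\XX^{-1}$-basis) dominates and can be normalized to $1$; this needs a separate localization step (Bauer--Fike or Gershgorin to isolate one eigenvalue per gap interval, then a fixed-point/Neumann argument to close the $O(\|E\|\,\|w_i\|)$ self-reference). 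Third, eigenvectors are only defined up to scale, so ``approximate the columns $\XX_i$'' implicitly fixes a normalization; your choice (unit $i$-th coordinate in the transformed basis) is one consistent convention, but you should say how it interacts with the subsequent linear solves in the algorithm, where the scale ambiguity is absorbed into $\tilde\pi$. With those repairs the plan goes through and matches the cited proof; none of them changes the $\poly(1/\kappa,1/k,\zeta)$ form of the final bound.
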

(\emph{Comment:} This is slightly more specific than the theorem statement in the reference, but it easily follows from the general statement.)

Finally, we will make use of the following classical result on perturbations of linear systems: 

\begin{lemma}[\cite{error-analysis}, Section 7.1]  \label{lem:linearsystems}
    Let $\mathbf{A}x = b$ and $(\AA + \bDelta \AA)y = b + \Delta b$, where $||\bDelta \AA|| \leq \gamma ||\AA||$ and $||\Delta b|| \leq \gamma ||b||$, and assume that $\gamma \cdot \kappa(\AA) < 1$. Then, 
    \[\frac{||x - y||_2}{||x||_2} \leq \frac{2 \gamma \cdot \kappa(\AA)}{1 - \gamma \cdot \kappa(\AA)}.\]
\end{lemma}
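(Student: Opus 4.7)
The plan is to subtract the two linear systems, use $\AA x = b$ to rewrite the difference, and then control $\|(\AA + \bDelta \AA)^{-1}\|$ by a Neumann series expansion around $\AA^{-1}$.

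Concretely, first I would subtract the equations to obtain $(\AA + \bDelta \AA)(y - x) = \Delta b - (\bDelta \AA) x$, which isolates $y - x = (\AA + \bDelta \AA)^{-1}(\Delta b - (\bDelta \AA) x)$ and reduces the problem to two estimates: a bound on $\|(\AA + \bDelta \AA)^{-1}\|$ and a bound on the norm of the right-hand vector.

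For the inverse, I would factor $\AA + \bDelta \AA = \AA\,(I + \AA^{-1}\bDelta \AA)$ and observe that $\|\AA^{-1}\bDelta \AA\| \leq \|\AA^{-1}\|\|\bDelta \AA\| \leq \gamma \kappa(\AA) < 1$ by hypothesis. The Neumann series then yields $\|(I + \AA^{-1}\bDelta \AA)^{-1}\| \leq 1/(1-\gamma\kappa(\AA))$, and hence $\|(\AA + \bDelta \AA)^{-1}\| \leq \|\AA^{-1}\|/(1-\gamma\kappa(\AA))$. The hypothesis $\gamma \kappa(\AA) < 1$ enters exactly here to guarantee both invertibility of the perturbed matrix and a usable bound on the inverse.

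For the right-hand vector, the triangle inequality together with $\|\Delta b\| \leq \gamma \|b\| = \gamma \|\AA x\| \leq \gamma \|\AA\|\|x\|$ and $\|(\bDelta \AA) x\| \leq \gamma \|\AA\|\|x\|$ gives $\|\Delta b - (\bDelta \AA) x\| \leq 2\gamma\|\AA\|\|x\|$. Multiplying the two estimates and dividing by $\|x\|$ yields $\|y-x\|/\|x\| \leq 2\gamma \|\AA^{-1}\|\|\AA\|/(1-\gamma \kappa(\AA)) = 2\gamma \kappa(\AA)/(1-\gamma \kappa(\AA))$, which is exactly the claim. There is no genuine obstacle; the only subtlety is that a single hypothesis $\gamma \kappa(\AA) < 1$ must play two roles at once, which is why it is phrased in terms of the product $\gamma\kappa(\AA)$ rather than separate bounds on $\gamma$ and $\kappa(\AA)$.
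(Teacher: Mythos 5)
Your proof is correct and is essentially the standard argument from the cited reference (Higham, \S 7.1): subtract the systems, bound $\|(\AA+\bDelta\AA)^{-1}\|$ via the Neumann series using $\gamma\kappa(\AA)<1$, and bound the residual $\Delta b - (\bDelta\AA)x$ by $2\gamma\|\AA\|\,\|x\|$ using $\|b\|=\|\AA x\|\le\|\AA\|\,\|x\|$. The paper itself cites this lemma without proof, so there is nothing further to compare.
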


Now, we have all the tools to prove Theorem~\ref{thm: main}.\\

\begin{proofof}{Theorem~\ref{thm: main}}
Fix $\eps \in (0, \zeta/2)$ and a model $(\pi, \mm) \in \DD_{2k-1, \zeta, \pimin}$ with statistics $\mom$, and suppose that Algorithm~\ref{alg:mp algo} is given approximate statistics $\emom$ with $ ||\mom - \emom||_\infty = \ds \leq \eps \cdot (\pimin \zeta^{k})^C$ for some large constant $C$. Consider the $k \times k \times 2$-tensor $\hat{\Talg}$ that consists of the slices $\hCC_{ST}$ and $\hCC_{ST, 1}$ and $\Talg = \left[\hUU^\tpose \hset(\mm[S]), \hVV^\tpose \hset(\mm[T]), \hset(\mm_1) \pi_\odot\right]$. Define $\kappa = \max\{\kappa(\hUU^\tpose \hset(\mm[S])), \kappa(\hVV^\tpose \hset(\mm[T]))\}$. By Lemma~\ref{lem:req}, we have $\kappa \leq \frac{1}{\pimin} \cdot \left(\frac{1}{\zeta}\right)^{O(k)}$ and $||\hat{\Talg} - \Talg||_\infty \leq \exp(O(k))\ds$. Hence, by Theorem~\ref{thm:decompose}, we get $||\hSS - \hUU^\tpose \hset(\mm[S])||_\infty, ||\hTT - \hVV^\tpose \hset(\mm[T])||_\infty \leq \exp(O(k))\ds \cdot \poly(\kappa, k, 1/\zeta) = \left(\frac{1}{\pimin}\right)^{O(1)} \left(\frac{1}{\zeta}\right)^{O(k)} \ds$, after possibly permuting the columns of $\hSS$ and $\hTT$. Now, $\tilde{\pi}$ is defined via a linear system that is a perturbation of equation~\eqref{eq:pi}. If $\ds$ is small enough, then the conditions of Lemma~\ref{lem:linearsystems} are satisfied with $\gamma = \eps/(4\kappa)$, and we get $||\pi - \tilde{\pi}||_2 \leq \eps$. Similarly, we get $\tilde{\mm_i}$ from a perturbed version of equation~\eqref{eq:mm} and we can use Lemma~\ref{lem:linearsystems} with $\gamma = \eps/(4\kappa \cdot \pimin)$ to deduce $||\mm_i - \tilde{\mm_i}||_2 \leq \eps$. Changing the norms to $||.||_\infty$ incurs at most another factor of $k$, by which we can decrease $\ds$, so then the output of Algorithm~\ref{alg:mp algo} satisfies $d_\model((\pi, \mm), (\tilde{\pi}, \tilde{\mm})) < \eps$. \\
To prove the second part of the theorem, suppose the model $(\tilde{\pi}, \tilde{\mm})$ has statistics $\emom$, and again $||\mom - \emom||_\infty \leq \eps \cdot (\pimin \zeta^{k})^C$. By Lemma~\ref{lem:interbds}, $\eCC_{ST}$ has rank $k$. Hence, by equation~\eqref{eq:diagonalization1}, $\tilde{\mm}_1$ is the vector of eigenvalues of $\hCC_{ST, 1} \hCC_{ST}^{-1}$. At the same time, by the first part of the theorem, these eigenvalues give an $\eps$-approximation to $\mm$ (after permuting them), and $\eps < \zeta/2$ implies that they must be separated. We essentially proved in section~\ref{sec:tensdecomp} through equations \eqref{eq:diagonalization1} - \eqref{eq:mm} that $\eCC_{ST}$ having rank $k$ and separation of $\tilde{\mm}_1$ are sufficient conditions for Algorithm~\ref{alg:mp algo} to perfectly recover $(\tilde{\pi}, \tilde{\mm})$ given perfect statistics $\emom = \g(\tilde{\pi}, \tilde{\mm})$. But then, the first part of the theorem implies $d_\model((\tilde{\pi}, \tilde{\mm}), (\pi, \mm)) \leq \eps$, as desired. 
\end{proofof}

\begin{proofof}{Corollary~\ref{cor-n-more-3k-3}}
    Suppose we have $n \geq 2k-1$ variables and we know a subset $A \subseteq [n]$ of $2k-1$ $\zeta$-separated variables. Then, Algorithm~\ref{alg:mp algo} can be used to identify $\mm[A]$ and $\pi$ up to error $\eps$ in runtime $\exp(O(k))$ using $(1/\zeta)^{O(k)}(1/\pimin)^{O(1)}(1/\eps)^2$ many samples. For each $i \notin A$, we can then compute 
    \begin{align*}
        \tilde{\mm_i} = \Paren{(\emom(R \cup \Set{i}))_{R\subseteq T}}^{\tpose} \cdot \hVV \cdot {\Paren{\hTT^{\tpose}}}^{-1} \cdot \tilde{\pi}_{\odot}^{-1}
    \end{align*}
    as in  line~\ref{alg:solve for rest end line} of the algorithm (where $T \subseteq A$ is a set of size $k-1$ and $\hTT$ approximates $\hVV^\tpose \hset(\mm[T])$). This takes runtime at most $n \cdot \exp(O(k))$. Given that $||(\emom(R \cup \Set{i}))_{R\subseteq T} - (\mom(R \cup \Set{i}))_{R\subseteq T}||_\infty \leq \eps (\pimin \zeta^k)^C$ for some large enough $C$, the same analysis as for Theorem~\ref{thm: main} asserts that $\tilde{\mm_i}$ is an $\eps$-approximation to $\mm_i$ (here, $\mom$ is the vector of perfect statistics). Hence, computing $\eps$-approximations to $\mm_i$ for all $i \notin A$ requires obtaining $(\eps (\pimin \zeta^k)^C)$-approximations to $(n-|A|) \cdot 2^{k-1}$ entries of the observable moment vector $\mom$. Standard Chernoff bounds and a union bound show that this is possible with $\log n \cdot (1/\zeta)^{O(k)} (1/\pimin)^{O(1)} (1/\eps)^2$ many samples. \\
    If the subset of $2k-1$ $\zeta$-separated variables is not known (but guaranteed to exist), we can simply run Algorithm~\ref{alg:mp algo} to identify $\mm[A]$ for all $\binom{n}{2k-1}$ possible guesses $A$ of this subset (stopping the algorithm when encountering any issues that might occur when $\hCC_{ST}$ is not invertible). Then, we compute the vector of statistics $\gamma(\tilde{\pi}, \tilde{\mm}[A]) = \hset(\tilde{\mm}[A]) \tilde{\pi}$ for each valid output $(\tilde{\pi}, \tilde{\mm}[A])$ and choose the model $(\tilde{\pi}, \tilde{\mm}[A])$ that minimizes the distance $\ds(A) = ||\gamma(\tilde{\pi}, \tilde{\mm}[A]) - \emom[2^A]||_\infty$. All this takes runtime at most $n^{2k} \exp(O(k))$. Let $A^*$ be the correct guess and suppose we have $||\emom[2^{A^*}] - \mom[2^{A^*}]||_\infty \leq \delta$. Then, by Theorem~\ref{thm: main}, we get $d_\model((\tilde{\pi}, \tilde{\mm}[A^*]), (\pi, \mm[A^*])) \leq \delta \cdot (\pimin \zeta^k)^{-C}$ for some large, positive constant $C$. Hence, we have (after possibly permuting $\tilde{\pi}, \tilde{\mm}[A^*]$) that $||\pi - \tilde{\pi}||_\infty \leq \delta \cdot (\pimin \zeta^k)^{-C}$ and $||\hset(\tilde{\mm}[A^*]) - \hset(\mm[A^*])||_\infty \leq 2^{2k} \delta \cdot (\pimin \zeta^k)^{-C}$. This implies 
    \begin{align*}
        \ds(A^*) &= ||\gamma(\tilde{\pi}, \tilde{\mm}[A^*]) - \emom[2^{A^*}]||_\infty\\
        &\leq ||\hset(\tilde{\mm}[A^*])\tilde{\pi} - \hset(\mm[A^*])\pi||_\infty + ||\mom[2^{A^*}] - \emom[2^{A^*}]||_\infty\\
        &\leq ||\hset(\tilde{\mm}[A^*])\tilde{\pi} - \hset(\mm[A^*])\tilde{\pi}||_2 + ||\hset(\mm[A^*])\tilde{\pi} - \hset(\mm[A^*])\pi||_2  + \delta \\
        &\leq ||\hset(\tilde{\mm}[A^*]) - \hset(\mm[A^*])||_2 \cdot ||\tilde{\pi}||_2 + ||\hset(\mm[A^*])||_2 \cdot ||\tilde{\pi} - \pi||_2 + \delta\\
        &\leq 2^k ||\hset(\tilde{\mm}[A^*]) - \hset(\mm[A^*])||_\infty \cdot \sqrt{k} + 2^k \cdot \sqrt{k} ||\tilde{\pi} - \pi||_\infty + \delta\\
        &\leq 2^{4k} \delta \cdot (\pimin \zeta^k)^{-C}.
    \end{align*}
    If $\delta < 2^{-4k} (\pimin\zeta^k)^{2C}\eps$, then the minimal distance $\ds(A)$ is at most $\eps (\pimin \zeta^k)^C$. Hence, by Theorem~\ref{thm: main}, the model $(\tilde{\pi}, \tilde{\mm}[A])$ is an $\eps$-approximation of $(\pi, \mm[A])$ and we can proceed as in the first part of the proof to get a full $\eps$-approximation to $(\pi, \mm)$. Ensuring that $||\emom[2^A] - \mom[2^A]||_\infty \leq 2^{-4k} (\pimin\zeta^k)^{2C}\eps$ for all subsets of $A \subseteq [n]$ of size $2k-1$ requires $\log(n^{2k}) \cdot  2^{8k} (\pimin \zeta^k)^{-4C} \eps^{-2} = \log n \cdot (1/\pimin)^{O(1)} (1/\zeta)^{O(k)} (1/\eps)^2$ many samples.
\end{proofof}

\section{Lower bounds} \label{sec:lowerbounds}

The following theorem shows that our algorithmic results are optimal when $\zeta$ is small enough. 

\begin{theorem} \label{thm:lowerbd}
    Let $n = 2k-1, \zeta \leq \frac{1}{8k}, \pimin \leq \frac{1}{4k},  \eps > 0$ and $\eps < \min\{\frac{\pimin}{4\sqrt{k}}, \zeta\}$. Then, there exist models $(\pi, \mm)$, $(\pi', \mm') \in \DD_{n, \zeta, \pimin}$ such that $d_\stat(\g(\pi, \mm), \g(\pi', \mm')) \leq (k \zeta)^{\Omega(k)} \eps$, but $d_\model((\pi, \mm), (\pi', \mm')) > \eps$.
\end{theorem}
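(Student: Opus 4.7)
I will restrict attention to the iid case: set $\mm_{ij} = m_j := j\zeta$ for every $i \in [n]$, $j \in [k]$. By $\zeta \le 1/(8k)$ the means satisfy $m_j \in [0, 1/8]$ and are $\zeta$-separated. I will construct two models $(\pi, \mm)$ and $(\pi', \mm)$ sharing the same $\mm$; in the iid case the statistic $\g(\pi,\mm)(S)$ depends only on $|S|$, so
\[
d_\stat(\g(\pi,\mm), \g(\pi',\mm)) = \max_{0 \le r \le 2k-1}\bigl|(V(m, 2k)(\pi - \pi'))_r\bigr|,
\]
where $V(m,2k) \in \R^{2k \times k}$ has entries $V_{rj} = m_j^r$. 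The task reduces to exhibiting a perturbation $v = \pi - \pi'$ with $\sum_j v_j = 0$, $\|v\|_\infty > \eps$, and $\|V(m,2k)v\|_\infty \le (k\zeta)^{\Omega(k)}\eps$.

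The key construction uses divided-difference weights. Let $v_j := 1/\prod_{i \ne j}(m_j - m_i)$. The classical identity $\sum_j f(m_j)/\prod_{i \ne j}(m_j - m_i) = [m_1,\ldots,m_k] f$ applied to $f(x) = x^r$ yields
\[
(V(m,2k)\,v)_r = [m_1,\ldots,m_k]\,x^r = \begin{cases} 0, & 0 \le r \le k-2, \\ h_{r-k+1}(m), & k-1 \le r \le 2k-1, \end{cases}
\]
where $h_s$ is the complete homogeneous symmetric polynomial. In particular $\sum_j v_j = (Vv)_0 = 0$, so $v$ is a legitimate difference of two probability vectors. Using $m_j = j\zeta$, I get $|v_j| = 1/[(j-1)!(k-j)!\,\zeta^{k-1}]$, hence $\|v\|_\infty \ge 1/[(k-1)!\,\zeta^{k-1}]$. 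Since $m_j \le 1/8$, the generating-function identity $\sum_{s \ge 0} h_s(m) = \prod_j (1-m_j)^{-1} \le (8/7)^k$ gives $\max_r |(Vv)_r| \le (8/7)^k$. Rescaling to $v' := (2\eps/\|v\|_\infty)\,v$ therefore yields
\[
\|V(m,2k)\,v'\|_\infty \;\le\; 2\eps\,(k-1)!\,\zeta^{k-1}\,(8/7)^k \;\le\; 2\eps\,(k\zeta)^{k-1}(8/7)^k,
\]
and under $k\zeta \le 1/8$ the factor $(8/7)^k(k\zeta)^{k-1} \le 8 \cdot 7^{-k}$ is itself of the form $(k\zeta)^{\Omega(k)}$, so $d_\stat \le (k\zeta)^{\Omega(k)}\eps$.

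It remains to turn $v'$ into a valid pair of probability vectors. I set $\pi_j := \pimin + \max(0, v'_j) + \delta$ with a uniform $\delta \ge 0$ chosen so that $\sum_j \pi_j = 1$; the hypotheses $\pimin \le 1/(4k)$ and $\eps < \pimin/(4\sqrt{k})$ imply $k\pimin + \|{v'}^+\|_1 \le k\pimin + 2k\eps < 1$, so $\pi \in \Delta^{k-1}$ is feasible with all entries $\ge \pimin$, and $\pi' := \pi - v' \in \Delta^{k-1}$ likewise has entries $\ge \pimin$. For the model distance: under the identity permutation, $d_\model \le \|\pi - \pi'\|_\infty = 2\eps$; under any non-identity permutation $\rho$, the $\mm$-contribution $\max_j |m_j - m_{\rho(j)}| \ge \zeta > \eps$. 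In both cases the quantity exceeds $\eps$, so $d_\model((\pi,\mm),(\pi',\mm)) > \eps$.

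The main obstacle is establishing the $(k\zeta)^{\Omega(k)}$ ratio cleanly, which is essentially a classical Vandermonde-conditioning estimate on clustered nodes, packaged via divided differences; the factorial bounds must be tracked so that the constant-base factor $(8/7)^k$ is absorbed cleanly into the exponent using $k\zeta \le 1/8$. The feasibility check on $\pi, \pi'$ and the treatment of non-identity permutations are routine arithmetic once the hypothesized parameter ranges are invoked.
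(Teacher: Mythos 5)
Your proof is correct, and while it rests on the same underlying phenomenon as the paper's --- the Vandermonde matrix at the arithmetic-progression nodes $j\zeta$ is exponentially ill-conditioned, so two different mixing weights over the same iid component means are statistically indistinguishable --- the execution is genuinely different and arguably cleaner. The paper factors the argument through singular values: Lemma~\ref{lem:sigma-upperbound} bounds $\sigma_k(\hset(\mm))$ via the Vandermonde inverse-norm estimate of Lemma~\ref{lem:vdm}, and Lemma~\ref{lem:lowerbound} is a general reduction that extracts a perturbation direction from the kernel of the best rank-$(k-1)$ approximation (Eckart--Young) and patches it to sum to zero by subtracting $(\One^\tpose\alpha)e_1$. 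You instead write the perturbation down in closed form as the divided-difference weights $v_j = 1/\prod_{i\neq j}(m_j-m_i)$ (the same quantities underlying Lemma~\ref{lem:vdm}), which \emph{exactly} annihilate the moments of order $0$ through $k-2$ --- so the sum-to-zero constraint is automatic rather than approximately enforced --- and you control the surviving high-order moments explicitly via the complete homogeneous symmetric polynomials. This buys you a fully explicit pair of models with no SVD machinery, at the cost of working only for this particular $\mm$ rather than yielding the paper's reusable general lemma. One small repair: your line ``$(8/7)^k(k\zeta)^{k-1}\le 8\cdot 7^{-k}$ is itself of the form $(k\zeta)^{\Omega(k)}$'' is a non sequitur as written, since $8\cdot 7^{-k}$ is \emph{not} $(k\zeta)^{\Omega(k)}$ when $\zeta$ is, say, $2^{-k}$; the correct (and easy) justification is that $k\zeta\le 1/8$ gives $(8/7)^k \le (1/(k\zeta))^{k\ln(8/7)/\ln 8}\le (k\zeta)^{-0.07k}$, whence $(8/7)^k(k\zeta)^{k-1}\le (k\zeta)^{0.9k-1}=(k\zeta)^{\Omega(k)}$ for $k\ge 2$. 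The feasibility construction $\pi_j=\pimin+\max(0,v'_j)+\delta$ and the permutation case analysis for $d_\model$ check out under the stated hypotheses.
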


\begin{corollary} \label{cor:lowerbound}
For $n=2k-1$ random variables $X_i$, sample complexity 
\[(1/(k\zeta))^{\Omega(k)} (1/\eps)\]
is necessary to compute a model that  w.h.p.\ satisfies~\eqref{eq:output}.
\end{corollary}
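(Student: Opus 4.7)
The plan is to derive the corollary from Theorem~\ref{thm:lowerbd} via a standard two-point (Le Cam) indistinguishability argument. First I would invoke the theorem to obtain two admissible models $M = (\pi, \mm)$ and $M' = (\pi', \mm')$ in $\DD_{n,\zeta,\pimin}$ whose statistic vectors satisfy $d_\stat(\gamma(M), \gamma(M')) \leq (k\zeta)^{\Omega(k)} \eps$ but whose parameter distance is $d_\model(M, M') > \eps$. Any algorithm whose output $(\tilde\pi,\tilde\mm)$ satisfies~\eqref{eq:output} must, by the triangle inequality, commit to one of $M$ or $M'$: it cannot be $\eps$-close in $d_\model$ to both simultaneously. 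Hence such an algorithm induces a hypothesis test between the two, and a sample-complexity lower bound for the test descends to the identification problem.

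Next I would translate smallness in $d_\stat$ into smallness in total variation on the sample space. By the reduction to binary observables mentioned in the introduction, I may assume each $X_i \in \{0,1\}$, so $M$ and $M'$ induce probability mass functions $P_M, P_{M'}$ on $\{0,1\}^n$ with $n = 2k-1$. For $x \in \{0,1\}^n$ with $I = \{i : x_i = 1\}$, inclusion--exclusion gives $P_M(x) = \sum_{J \supseteq I}(-1)^{|J|-|I|}\, \E_M[X_J]$, so
\[
|P_M(x) - P_{M'}(x)| \;\leq\; 2^{n-|I|}\, d_\stat(\gamma(M),\gamma(M')).
\]
Summing over $x$ and using $\sum_x 2^{n-|x|_1} = 3^n$ yields $\dTV(P_M, P_{M'}) \leq \tfrac{1}{2}\, 3^n\, d_\stat \leq 3^n (k\zeta)^{\Omega(k)} \eps$. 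Because the hypothesis $\zeta \leq 1/(8k)$ forces $k\zeta \leq 1/8$, the prefactor $3^n = 3^{2k-1}$ can be absorbed into $(k\zeta)^{\Omega(k)}$ at the cost of shrinking the implicit constant in the exponent, yielding $\dTV(P_M, P_{M'}) \leq (k\zeta)^{\Omega(k)} \eps$.

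Finally I would apply Le Cam's two-point method. By subadditivity of TV under product measures, $\dTV(P_M^{\otimes T}, P_{M'}^{\otimes T}) \leq T \cdot \dTV(P_M, P_{M'})$. For the test to succeed with probability bounded away from $1/2$ on $T$ iid samples one needs $\dTV(P_M^{\otimes T}, P_{M'}^{\otimes T}) = \Omega(1)$, which forces
\[
T \;\geq\; \Omega\!\left(\frac{1}{\dTV(P_M, P_{M'})}\right) \;\geq\; \left(\frac{1}{k\zeta}\right)^{\Omega(k)} \cdot \frac{1}{\eps},
\]
which is the claimed bound.

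The main obstacle is not really the corollary, which is a routine Le Cam reduction, but rather Theorem~\ref{thm:lowerbd} upstream. The only technical care specific to the corollary is ensuring the $3^{O(k)}$ factor from inclusion--exclusion is absorbed into $(k\zeta)^{\Omega(k)}$; the hypothesis $\zeta \leq 1/(8k)$, which keeps $k\zeta$ bounded strictly away from $1$, is exactly what makes this absorption go through (provided the implicit constant in the theorem's $\Omega(k)$ is chosen large enough). For the theorem itself, my plan would be to restrict to iid models $\mm_{ij} = \alpha_j$ common across $i$, so $d_\stat$ reduces to the $L_\infty$ difference of the first $2k-1$ moments of a $k$-support distribution on $[0,1]$, and then exploit the ill-conditioning of the Vandermonde matrix with $k$ nodes in an interval of length $O(k\zeta)$ (e.g.\ via a divided-differences coefficient vector) to find a probability perturbation $\delta\pi$ of a uniform-$1/k$ reference whose image under the moment map has norm only $(k\zeta)^{\Omega(k)}\|\delta\pi\|_\infty$.
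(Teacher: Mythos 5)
The paper offers no explicit proof of this corollary, and your Le Cam two-point reduction is certainly the intended route: Theorem~\ref{thm:lowerbd} supplies two models in $\DD_{n,\zeta,\pimin}$ that are $\eps$-far in $d_\model$ yet $(k\zeta)^{\Omega(k)}\eps$-close in $d_\stat$, the moments-to-probabilities conversion turns the latter into a total-variation bound, and subadditivity of TV over product measures gives the $1/\dTV$ sample bound (which is also why the corollary reads $1/\eps$ rather than $1/\eps^2$). Two quantitative points need repair, one minor and one substantive. The minor one: to conclude that no output can be $\eps$-close to both models you need $d_\model(M,M')>2\eps$, whereas the theorem only asserts $>\eps$; this is fixed by invoking the theorem at $2\eps$ (the construction in Lemma~\ref{lem:lowerbound} in fact gives separation $2\eps(1-\sigma)$, essentially $2\eps$ since $\sigma$ is tiny).

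The substantive issue is your claim that the hypothesis $\zeta\le 1/(8k)$ ``is exactly what makes the absorption go through.'' It is not. The theorem's construction gives $d_\stat\le 4k(2k-1)2^{2k-1}(k\zeta)^k\eps$, so after multiplying by your inclusion--exclusion factor $3^{2k-1}$ the TV bound carries a prefactor of order $(36\,k\zeta)^k$ times a polynomial in $k$. For this to be of the form $(k\zeta)^{ck}$ with $c>0$ you need $36(k\zeta)^{1-c}<1$, i.e.\ $k\zeta<1/36$ at best; at $k\zeta=1/8$ the quantity $(36/8)^k=4.5^k$ diverges and the TV upper bound becomes vacuous, so your argument does not establish the corollary in the window $1/(36k)\lesssim\zeta\le 1/(8k)$. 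This is consistent with the paper's own caveat that a gap remains near $\zeta=\Theta(1/k)$, and the fix is either to restrict to $\zeta\le c_0/k$ for a sufficiently small constant $c_0$ (or $\zeta\le k^{-1-\delta}$, the regime where the paper claims matching bounds), or to bound $\dTV$ directly for the specific iid pair of models --- there the joint law is a mixture of binomials determined by $\sum_i X_i$, and sharper moment-matching-to-TV conversions avoid the $3^n$ loss. Your closing sketch of Theorem~\ref{thm:lowerbd} itself (iid rows, Vandermonde ill-conditioning, perturbation of the uniform prior in the near-kernel) matches the paper's Lemmas~\ref{lem:lowerbound}--\ref{lem:sigma-upperbound}.
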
 

Note that (assuming $\pimin$ is not smaller than $\zeta^{O(k)}$), the upper and lower sample complexity bounds in Corollary~\ref{cor:upperbd} and Corollary~\ref{cor:lowerbound} match in the case that $\zeta \leq k^{-1-\delta}$ for some arbitrary small $\delta > 0$. Only when $\zeta$ comes closer to its maximal possible value of $\frac{1}{k-1}$, there is a gap between upper and lower bound. In the edge case, when $\zeta = \Theta(\frac{1}{k})$, the upper bound evaluates to $\exp{(O(k \log k))}$ and the lower bound evaluates to $\exp{(\Omega(k))}$. We remark that the lower bound can be shown to be tight for $k$-MixIID over the entire range of the parameter  $\zeta$.

To prove Theorem~\ref{thm:lowerbd}, we start with the following lemma:

\begin{lemma} \label{lem:lowerbound}
    Let $n \geq 1, \eps > 0$ and $\eps < \min\{\frac{\pimin}{4\sqrt{k}}, \zeta\}$. Suppose $(\pi, \mm) \in \DD_{n,\zeta,\pimin}$ and $\sigma_k(\hset(\mm)) = \sigma < \frac{1}{2}$. Then, there exists $\hat{\pi}$ with $\min_j \hat{\pi}_j \geq \frac{1}{4} \pimin$ and such that $d_\model((\pi, \mm), (\hat{\pi}, \mm)) > \eps$ but $d_\stat(\gamma(\pi, \mm), \gamma(\hat{\pi}, \mm)) \leq 4k\sigma \cdot \eps$.
\end{lemma}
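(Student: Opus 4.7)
The plan is to perturb $\pi$ in the direction of the minimum right singular vector of $\hset(\mm)$, after projecting onto the hyperplane $\{w : \mathbf{1}^{\tpose} w = 0\}$ so that the perturbed vector remains a probability distribution. Let $v \in \rset^k$ be a unit right singular vector of $\hset(\mm)$ associated with $\sigma_k(\hset(\mm)) = \sigma$, so $\|\hset(\mm)\, v\|_2 = \sigma$. Crucially, the row of $\hset(\mm)$ indexed by $S=\emptyset$ is the all-ones vector $\mathbf{1}^{\tpose}$, so the $\emptyset$-entry of $\hset(\mm)\,v$ equals $\mathbf{1}^{\tpose} v$; in particular $|\mathbf{1}^{\tpose} v| \le \|\hset(\mm)\,v\|_2 = \sigma < 1/2$.

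Set $w := v - \frac{\mathbf{1}^{\tpose} v}{k}\,\mathbf{1}$, so $\mathbf{1}^{\tpose} w = 0$. Then I would derive three routine estimates:
\begin{itemize}
\item $\|\hset(\mm)\, w\|_\infty \le \|\hset(\mm)\, v\|_\infty + \frac{|\mathbf{1}^{\tpose} v|}{k}\,\|\hset(\mm)\,\mathbf{1}\|_\infty \le \sigma + \frac{\sigma}{k}\cdot k = 2\sigma$, using that entries of $\mm$ lie in $[0,1]$ so each entry of $\hset(\mm)\,\mathbf{1}$ is at most $k$;
\item $\|w\|_2 \le \|v\|_2 = 1$ (since $w$ is an orthogonal projection of $v$), so $\|w\|_\infty \le 1$;
\item $\|w\|_\infty \ge \|v\|_\infty - |\mathbf{1}^{\tpose} v|/k \ge 1/\sqrt{k} - \sigma/k \ge 1/(2\sqrt{k})$ for $\sigma < 1/2$ and $k\ge 1$.
\end{itemize}

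Now I would set $\hat{\pi} := \pi + \alpha\, w$ with $\alpha := 2\sqrt{k}\,\eps$ (taking the sign of $v$ that makes the $\infty$-norm lower bound on $w$ strict, or inflating $\alpha$ by an arbitrarily small amount to get a strict inequality). Since $\mathbf{1}^{\tpose} w = 0$, $\hat{\pi}$ still sums to $1$. The bound $\|w\|_\infty \le 1$ together with the assumption $\eps < \pimin/(4\sqrt{k})$ gives $\alpha \le \pimin/2$, so each $\hat{\pi}_j \ge \pi_j - \alpha \ge \pimin/2 \ge \pimin/4$ (and in particular $\hat{\pi}_j \in [0,1]$). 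By construction, $\|\hat{\pi} - \pi\|_\infty = \alpha\,\|w\|_\infty > \eps$, and since $d_\model$ also involves a minimum over relabelings $\rho \in S_k$, I need to note that for any non-identity $\rho$ the $\zeta$-separation of $\mm$ forces $\max_{i,j}|\mm_{i,j} - \mm_{i,\rho(j)}| \ge \zeta > \eps$. Hence $d_\model((\pi,\mm),(\hat{\pi},\mm)) > \eps$. On the statistics side, $d_\stat(\gamma(\pi,\mm),\gamma(\hat{\pi},\mm)) = \|\hset(\mm)(\hat\pi - \pi)\|_\infty = \alpha\,\|\hset(\mm)\,w\|_\infty \le 4\sqrt{k}\,\sigma\,\eps \le 4k\sigma\,\eps$.

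The only genuine obstacle is the interplay between the probability-simplex constraint and the singular-vector construction: the minimal right singular vector $v$ need not lie on the hyperplane $\mathbf{1}^\perp$, so one must project and check that the projection does not kill the $\infty$-norm nor blow up $\|\hset(\mm)\,w\|$. This is exactly where the bound $|\mathbf{1}^{\tpose}v| \le \sigma < 1/2$ (which itself comes from the $S=\emptyset$ row being $\mathbf{1}^{\tpose}$) is used twice: once to keep $\|\hset(\mm)\,w\|_\infty$ within a constant factor of $\sigma$, and once to keep $\|w\|_\infty$ at scale $1/\sqrt{k}$. Everything else is bookkeeping with the hypotheses $\sigma < 1/2$ and $\eps < \pimin/(4\sqrt{k})$.
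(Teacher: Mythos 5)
Your proof is correct and follows essentially the same route as the paper: perturb $\pi$ along the minimal right singular direction of $\hset(\mm)$, using the fact that the $S=\emptyset$ row of $\hset(\mm)$ is $\mathbb{1}^{\tpose}$ to bound $|\mathbb{1}^{\tpose}v|\le\sigma$ and keep the perturbation on the probability simplex. The only (cosmetic) differences are that the paper phrases the singular direction as a kernel vector of the best rank-$(k-1)$ approximation via Eckart--Young, and restores the sum-to-one constraint by adjusting the first coordinate ($-(\mathbb{1}^{\tpose}\alpha)e_1$) rather than by orthogonal projection onto $\mathbb{1}^{\perp}$; all your estimates check out, and in fact your strict inequality $\|w\|_\infty>1/(2\sqrt{k})$ already follows from $\sigma<1/2\le\sqrt{k}/2$, so no inflation of $\alpha$ is needed.
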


\begin{proof}
    Let $(\pi, \mm) \in \DD_{n,\zeta,\pimin}$ and $\sigma_k(\hset(\mm)) = \sigma < \frac{1}{2}$. Let $\tilde{\hset(\mm)}$ be the best rank-$(k-1)$-approximation of $\hset(\mm)$. By the Eckart-Young Theorem~\cite{EckartYoung36}, we have $||\tilde{\hset(\mm)} - \hset(\mm)|| = \sigma$. Let $\alpha \in \R^k$ be a vector in the right kernel of $\tilde{\hset(\mm)}$ with $||\alpha||_2 = 1$. Let $\mathbb{1}$ denote the all-ones vector in $\R^k$ and let $e_1$ denote the vector whose first entry is 1 and all other entries are zero. We have 
    \begin{align*}
        |\mathbb{1}^\tpose \alpha| = |(\hset(\mm) \alpha)_1| = |((\hset(\mm) - \tilde{\hset(\mm)})\alpha)_1| \leq ||\hset(\mm) - \tilde{\hset(\mm)}|| \cdot ||\alpha||_2 \leq \sigma.
    \end{align*}
    Now, define $\hat{\pi} = \pi + 2 \sqrt{k}\eps \cdot (\alpha - (\mathbb{1}^\tpose \alpha)e_1)$. First, we check that $\hat{\pi}$ is a valid probability vector. By our assumptions, we have 
    \begin{align*}
        ||2 \sqrt{k}\eps \cdot (\alpha - (\mathbb{1}^\tpose \alpha)e_1)||_\infty \leq 2 \sqrt{k}\eps \cdot (||\alpha||_\infty + |\mathbb{1}^\tpose \alpha|) < \frac{\pimin}{2} \cdot (1 + \sigma) \leq \frac{3}{4}\pimin,
    \end{align*}
    hence, all the entries of $\hat{\pi}$ are larger than $\frac{1}{4}\pimin$. Moreover, we have 
    \begin{align*}
        \sum_{j=1}^k \hat{\pi}_j = \mathbb{1}^\tpose \pi +  2\sqrt{k}\eps \cdot (\mathbb{1}{^\tpose} \alpha - \mathbb{1}^\tpose \alpha) = 0. 
    \end{align*}
    Now, recall the definition of $d_\model$ as 
    \begin{align*}
        d_\model ((\pi,\mm),(\hat{\pi},\mm)) =\min_{\rho \in S_k} \max\{\max_j |\pi_j-\hat{\pi}_{\rho(j)}|, \max_{i,j} |\mm_{i,j}-\mm_{i,\rho(j)}|\}.
    \end{align*}
    For any permutation $\rho$ that is not the identity, we have $\max_{i,j} |\mm_{i,j}-\mm_{i,\rho(j)}| \geq \zeta > \eps$ by $\zeta$-separation of $\mm$. For the identity permutation, we have 
    \begin{align*}
        \max_j |\pi_j-\hat{\pi}_j| = ||2\sqrt{k}\eps \cdot (\alpha - (\mathbb{1}^\tpose \alpha)e_1)||_\infty \geq 2\eps \cdot (||\alpha||_2 - ||(\mathbb{1}^\tpose \alpha)e_1||_2) \geq 2\eps \cdot (1 - \sigma) > \eps,
    \end{align*}
    so we conclude $d_\model((\pi, \mm), (\hat{\pi}, \mm)) > \eps$. Moreover, we have 
    \begin{align*}
        d_\stat(\gamma(\pi, \mm), \gamma(\hat{\pi}, \mm)) &= ||\hset(\mm) \pi - \hset(\mm) \hat{\pi}||_\infty\\ &= 2\sqrt{k} \eps \cdot ||\hset(\mm) (\alpha - (\mathbb{1}^\tpose \alpha) e_1)||_\infty\\
        &\leq 2\sqrt{k}\eps \cdot \left(||\hset(\mm) \alpha||_\infty + |\mathbb{1}^\tpose \alpha| \cdot ||\hset(\mm)||_\infty\right)\\
        &\leq 2\sqrt{k}\eps \cdot \left(\sqrt{k} \cdot ||(\hset(\mm) - \tilde{\hset(\mm)}) \alpha||_2 + \sigma \right)\\
        &\leq 2\sqrt{k} \eps \cdot \left(\sqrt{k} \sigma + \sigma\right) \leq 4k\sigma \cdot \eps.
    \end{align*}
\end{proof}

Lemma~\ref{lem:lowerbound} reduces the challenge of finding models with large model distance but small statistical distance to finding a model with a small $k$'th singular value of the Hadamard extension $\hset(\mm)$. To do this, we will need the following known result on Vandermonde matrices (also see Definition~\ref{defn:Vdm}). 

\begin{lemma}[\cite{error-analysis}, section 22.1]\label{lem:vdm}
    For any row vector $m \in \R^k$, we have 
    \begin{align*}
        ||\Vandermonde(m)^{-1}||_\infty \geq \max_i \prod_{j \neq i} \frac{\max\{1,|m_j|\}}{|m_i - m_j|}.
    \end{align*}
\end{lemma}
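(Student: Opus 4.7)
The plan is to give a direct closed-form expression for the rows of $\Vandermonde(m)^{-1}$ via Lagrange interpolation, reducing the bound to an inequality about the $\ell^1$ norm of the coefficient vector of a polynomial whose roots are prescribed. First I would introduce the Lagrange basis polynomials $L_i(x) = \prod_{j\neq i} \frac{x - m_j}{m_i - m_j}$, which are characterized by $L_i(m_j)=\delta_{ij}$. Writing $L_i(x)=\sum_{\ell=0}^{k-1} c_{i\ell} x^\ell$ and plugging $x=m_j$ gives $\sum_\ell c_{i\ell} m_j^\ell = \delta_{ij}$, which is the statement that $\Vandermonde(m)^\tpose \cdot (c_{i0},\ldots,c_{i,k-1})^\tpose = e_i$. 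Equivalently, the $i$-th row of $\Vandermonde(m)^{-1}$ is exactly the coefficient vector of $L_i$. Therefore
\[
\|\Vandermonde(m)^{-1}\|_\infty = \max_i \sum_\ell |c_{i\ell}| = \max_i \|L_i\|_1,
\]
where $\|\cdot\|_1$ denotes the $\ell^1$-norm on the coefficient vector. Factoring out the denominator $\prod_{j\neq i}(m_i - m_j)$, it suffices to prove $\|f_i\|_1 \geq \prod_{j\neq i}\max\{1,|m_j|\}$ for the monic polynomial $f_i(x) := \prod_{j\neq i}(x - m_j)$.

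The core inequality is the classical bound $\|p\|_1 \geq M(p)$ relating the coefficient $\ell^1$-norm to the Mahler measure $M(p) = |a_n|\prod_\alpha \max\{1,|\alpha|\}$ of a polynomial $p(x)=a_n\prod_\alpha(x-\alpha)$. I would establish this in two small steps: (a) for any $|z|\le 1$, the triangle inequality gives $|p(z)| \le \sum_\ell |a_\ell||z|^\ell \le \|p\|_1$, so $\sup_{|z|=1} |p(z)| \le \|p\|_1$; (b) Jensen's inequality applied to $\theta \mapsto \log|p(e^{i\theta})|$ yields
\[
\log\!\Bigl(\sup_{|z|=1}|p(z)|\Bigr) \;\ge\; \int_0^{2\pi}\!\log|p(e^{i\theta})|\frac{d\theta}{2\pi},
\]
and the right-hand side equals $\log M(p)$ by Jensen's formula (which follows term-by-term from $\int_0^{2\pi}\log|e^{i\theta}-\alpha|\,\frac{d\theta}{2\pi} = \log\max\{1,|\alpha|\}$, a direct residue computation). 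Chaining these gives $\|p\|_1 \geq M(p)$.

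Applying this to $f_i$ yields $\|f_i\|_1 \ge \prod_{j\neq i}\max\{1,|m_j|\}$, and combining with the first paragraph produces
\[
\|\Vandermonde(m)^{-1}\|_\infty \;\ge\; \max_i \frac{\prod_{j\neq i}\max\{1,|m_j|\}}{\prod_{j\neq i}|m_i-m_j|} \;=\; \max_i \prod_{j\neq i}\frac{\max\{1,|m_j|\}}{|m_i-m_j|},
\]
as claimed. The only non-bookkeeping step is the Jensen/Mahler bound $\|p\|_1\ge M(p)$, and that is the part I would expect a careful reader to want spelled out. Everything else is a direct identification of matrix entries with Lagrange coefficients.
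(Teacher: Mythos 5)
The paper states this lemma as a known result imported from \cite{error-analysis} (Higham, Section~22.1, going back to Gautschi) and gives no proof of its own, so there is nothing internal to compare against; your task is therefore to supply a correct self-contained argument, and you do. The identification of the $i$-th row of $\Vandermonde(m)^{-1}$ with the coefficient vector of the Lagrange basis polynomial $L_i$ is right for the paper's convention $\Vandermonde(m)_{\ell j}=m_j^{\ell}$ (rows indexed by powers), so $\|\Vandermonde(m)^{-1}\|_\infty$ as a max-row-sum norm is indeed $\max_i\|L_i\|_1$, and the reduction to the Mahler-measure bound $\|p\|_1\ge M(p)$ for the monic $f_i(x)=\prod_{j\ne i}(x-m_j)$ is exactly the classical route. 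Two cosmetic points: the step $\log\sup_{|z|=1}|p(z)|\ge\frac{1}{2\pi}\int_0^{2\pi}\log|p(e^{i\theta})|\,d\theta$ is just ``maximum dominates average,'' not Jensen's inequality (that name correctly belongs to your second step, Jensen's formula / the computation of $\frac{1}{2\pi}\int_0^{2\pi}\log|e^{i\theta}-\alpha|\,d\theta=\log\max\{1,|\alpha|\}$); and one should note that this integral identity also covers the boundary cases $\alpha=0$ and $|\alpha|=1$ (the latter as a convergent improper integral), which your polynomial may well hit. Neither affects correctness. A marginally shorter alternative to your steps (a)+(b) is Landau's inequality $M(p)\le\|p\|_2\le\|p\|_1$, but what you have is fine.
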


\begin{lemma}\label{lem:sigma-upperbound}
    Let $n \geq k-1$. For any $\zeta \leq \frac{1}{k}$, there exists a matrix $\mm \in [0,1]^{n \times k}$ with $\zeta$-separated columns and $\sigma_k(\hset(\mm)) \leq n2^n \cdot (k\zeta)^k$.
\end{lemma}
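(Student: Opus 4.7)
The plan is to exhibit an explicit matrix $\mm$ whose Hadamard extension $\hset(\mm)$ has a small $k$-th singular value. The natural choice is to take all rows of $\mm$ equal to the vector $a = (0, \zeta, 2\zeta, \ldots, (k-1)\zeta) \in [0,1]^k$, which is $\zeta$-separated. For this $\mm$, one has $\hset(\mm)_{S,j} = a_j^{|S|}$, depending only on $|S|$ and $j$, which yields the factorization
\[ \hset(\mm) = E \cdot \Vandermonde(a, n+1), \]
where $E \in \{0,1\}^{2^n \times (n+1)}$ is the size-indicator matrix $E_{S,\ell} = \indic{|S|=\ell}$. Since the columns of $E$ are pairwise orthogonal with squared norms $\binom{n}{\ell}$, the operator norm satisfies $\|E\|_2 \leq 2^{n/2}$.

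Next I would invoke Lemma~\ref{lem:vdm} with the nodes $a_j = (j-1)\zeta$, all of which lie in $[0,1]$, to obtain
\[ \|\Vandermonde(a)^{-1}\|_\infty \geq \max_i \prod_{j\neq i}\frac{1}{|i-j|\zeta} \geq \frac{1}{(k-1)!\,\zeta^{k-1}}. \]
The corresponding near-null test vector is the divided-difference vector $v_j = 1/\prod_{i\neq j}(a_j-a_i)$, which obeys $\sum_j v_j a_j^\ell = 0$ for $\ell \leq k-2$, $\sum_j v_j a_j^{k-1} = 1$, and $\sum_j v_j a_j^{k-1+m} = h_m(a)$ (the complete homogeneous symmetric polynomial) for $m \geq 1$. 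By Vandermonde's identity, $\|v\|_2^2 = \binom{2k-2}{k-1}/((k-1)!\zeta^{k-1})^2$, giving $\|v\|_2$ of order $2^{k-1}/((k-1)!\zeta^{k-1}\sqrt{k})$.

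Combining the factorization and the annihilation identities satisfied by $v$, one obtains
\[ \|\hset(\mm) v\|_2^2 = \sum_{\ell = k-1}^{n} \binom{n}{\ell}\, h_{\ell-k+1}(a)^2. \]
Using the bound $h_m(a) \leq \binom{k+m-1}{m}\bigl((k-1)\zeta\bigr)^m$ together with $(k-1)\zeta \leq 1$, the sum is dominated by its leading term $\binom{n}{k-1} \leq 2^n$. Substituting the estimate for $\|v\|_2$ then yields $\sigma_k(\hset(\mm)) \leq \|\hset(\mm) v\|_2 / \|v\|_2$, which, after absorbing constants into the generous $n 2^n$ prefactor, gives the required upper bound of order $n 2^n (k\zeta)^k$.

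The main obstacle is the bookkeeping needed to extract the full $(k\zeta)^k$ factor rather than the leaner $(k\zeta)^{k-1}$ that comes from dividing $\|\hset(\mm) v\|_2 = O(2^{n/2})$ by the crude estimate $\|v\|_2 \geq |v_1| = 1/((k-1)!\zeta^{k-1})$. Achieving the extra factor of $k\zeta$ will likely require the sharper $\|v\|_2$ estimate (which carries an additional $2^{k-1}$ saving from the full Vandermonde-identity value), or a more careful bound on $\|\hset(\mm) v\|_2$ that exploits the specific form of $h_m(a)$ when $a_1 = 0$; the tail control of $h_m$ for large $m$ is routine under the hypothesis $(k-1)\zeta \leq 1$.
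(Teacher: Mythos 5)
Your construction and your main tool are the same as the paper's: identical rows $a=(0,\zeta,2\zeta,\ldots,(k-1)\zeta)$, the resulting Vandermonde structure of $\hset(\mm)$, and Lemma~\ref{lem:vdm}; only the bookkeeping differs (you pass through the size-indicator matrix $E$ and an explicit divided-difference test vector, the paper truncates $\Vandermonde(a,n)$ to its square top block and pads). Your individual computations are correct --- $|v_j|=\binom{k-1}{j-1}/((k-1)!\,\zeta^{k-1})$, hence $\|v\|_2^2=\binom{2k-2}{k-1}/((k-1)!\,\zeta^{k-1})^2$, and $\|\hset(\mm)v\|_2^2=\sum_{\ell\ge k-1}\binom{n}{\ell}h_{\ell-k+1}(a)^2$ --- and after routine tail control they deliver $\sigma_k(\hset(\mm))\le n\,2^{O(n)}(k\zeta)^{k-1}$. (One small slip: that sum is not literally dominated by its leading term when $(k-1)\zeta$ is close to $1$; for $n=2k-1$ the top term can be of order $16^k$ against a leading term of order $4^k$. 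This costs only a $C^k$ and is harmless.)

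The genuine gap is exactly the one you flag, and it is not bookkeeping: the last factor of $k\zeta$ cannot be extracted from this construction when $\zeta$ is very small. Indeed, for any unit vector $q$,
\[\|\hset(\mm)q\|_2^2=\sum_{\ell=0}^{n}\binom{n}{\ell}\Bigl(\sum_j q_j a_j^{\ell}\Bigr)^{2}\ \ge\ \|\Vandermonde(a)\,q\|_2^2,\]
and $\Vandermonde(a)=\diag(1,\zeta,\ldots,\zeta^{k-1})\cdot\Vandermonde((0,1,\ldots,k-1))$, so $\sigma_k(\hset(\mm))\ge c_k\,\zeta^{k-1}$ with $c_k=\sigma_k(\Vandermonde((0,1,\ldots,k-1)))>0$ depending only on $k$. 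Once $\zeta< c_k/(n2^nk^k)$ this \emph{lower} bound exceeds the claimed $n2^n(k\zeta)^k$, so no choice of test vector (nor any sharper estimate of $\|v\|_2$) can recover the exponent $k$ for this matrix. For what it is worth, the paper's own proof manufactures the extra factor only by absorbing $\sigma_k(\Vandermonde(a))\le k!\,\zeta^{k-1}$ into $nk^k\zeta^k$, a step that requires $\zeta\gtrsim k!/(nk^k)\approx e^{-k}$, so it suffers from the same limitation. Since the sole application of this lemma (Theorem~\ref{thm:lowerbd}) only needs a bound of the form $(k\zeta)^{\Omega(k)}$, the honest fix is to state and prove the bound with exponent $k-1$ --- which your argument already gives --- rather than to chase the final factor of $k\zeta$.
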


\begin{proof}
    Given $\zeta$, define $\mm$ as the matrix with $n$ identical rows of the form $\mm_i = (0, \zeta, 2\zeta, \dots, (k-1)\cdot\zeta)$. According to Lemma~\ref{lem:vdm}, we have 
    \begin{align*}
        \sigma_k(\Vandermonde(\mm_1)) = ||\Vandermonde(\mm_1)^{-1}||^{-1} \leq k ||\Vandermonde(\mm_1)^{-1}||_\infty^{-1} \leq k \cdot (k-1)! \zeta^{k-1} = k! \zeta^{k-1}.
    \end{align*}
    Now, consider the matrix $\Vandermonde(\mm_1, n) = \binom{\Vandermonde(\mm_1)}{\mathbf{R}}$, where $\mathbf{R} \in \R^{(n-k) \times k}$ denotes the last $n-k$ columns of $\Vandermonde(\mm_1, n)$. Note that $||\mathbf{R}|| \leq n \cdot ||\mathbf{R}||_\infty \leq n \cdot ((k-1) \cdot \zeta)^k$. Let $q \in \R^k, ||q||_2 = 1$ such that $||\Vandermonde(\mm_1)q||_2$ is minimized. We have 
    \begin{align*}
        ||\Vandermonde(\mm_1, n)q||_2 \leq ||\Vandermonde(\mm_1)q||_2 + ||\mathbf{R}q||_2 \leq \sigma_k(\Vandermonde(\mm_1)) + ||\mathbf{R}|| \leq n \cdot k^k \cdot \zeta^k,
    \end{align*}
    hence, $\sigma_k(\Vandermonde(\mm_1, n)) \leq n \cdot k^k \cdot \zeta^k$. Finally, note that $\hset(\mm)$ is a matrix with $2^n$ rows that are all duplicates of some row in $\Vandermonde(\mm_1, n)$. Hence, we have $\sigma_k(\hset(\mm)) \leq n2^n \cdot (k\zeta)^k$.
\end{proof}

\begin{proofof}{Theorem~\ref{thm:lowerbd}}
Fix $n, \zeta, \pimin$, and $\eps$ as in the statement of Theorem~\ref{thm:lowerbd}. Let $\pi := (1/k, \dots, 1/k)^\tpose$. According to Lemma~\ref{lem:sigma-upperbound}, there exists $\mm$ such that $(\pi, \mm) \in \DD_{n, \zeta, \pimin}$ and $\sigma_k(\hset(\mm)) \leq n2^n \cdot (k\zeta)^k = (2k-1)2^{2k-1} \cdot (k\zeta)^k < \frac{1}{2}$. Now, by Lemma~\ref{lem:lowerbound}, there exists $\hat{\pi}$ such that $(\hat{\pi}, \mm) \in \DD_{n, \zeta, \pimin}, d_\model((\pi, \mm), (\hat{\pi}, \mm)) > \eps$ and $d_\stat(\gamma(\pi, \mm), \gamma(\hat{\pi}, \mm)) \leq 4k \sigma_k(\hset(\mm)) \cdot \eps \leq (k\zeta)^{\Omega(k)} \cdot \eps$. 
\end{proofof}

\section{Discussion} \label{sec:discussion}
Two larger questions remain to be addressed in this area. First, it would be of great interest to achieve a similar sample complexity as in Theorem~\ref{thm: main} for the more general ``learning'' task. Second, it is open to characterize the set of models, for which identification (even with perfect statistics) is possible.
There exist identifiable models $(\pi, \mm)$ where none of the rows of $\mm$ has fully-separated entries. Recent work~\cite{gordon2022hadamard} gives a sufficient condition for identification that is less restrictive (though more complicated) than $\zeta$-separation. However, there is no known way of obtaining in that less restrictive framework the quantitative bounds on noise-stability, which are essential to this paper. (Quantification in that framework would likely be misguided, anyway, given that it is a complex yet not tight characterization; for example it excludes mixtures of subcubes~\cite{ChenMoitra19}.) Note that some kind of separation assumption is unavoidable if we insist on $\mathcal{L}_\infty$-reconstruction of the model parameters: as one example, if there are $j,j'$ s.t.\ $\mm_{ij} = \mm_{ij'}$ for all $i$, then it is impossible to determine $\pi_j$ and $\pi_{j'}$. However, it might be possible to completely eliminate the separation assumption in favor of settling for reconstruction in transportation (Wasserstein) distance. This was achieved for $k$-MixIID (the $k$-MixProd problem where all observables are conditionally identically distributed) in~\cite{LRSS15}, and improved in~\cite{FanLi22}. It is an open question whether these ideas can be extended to $k$-MixProd, with the goal being transportation-cost reconstruction of each of the rows of $\mm$. 

We remark that even though Theorem~\ref{thm: main} and its corollaries are given for $\zeta$-separated models, our algorithm itself also works under weaker conditions. In fact, all it needs is just one $\zeta$-separated observable and two more disjoint sets of observables, each of which have a Hadamard extension (see below) with good condition number. These requirements can, for models in ``general position,'' be met with as few as $2 \lg k + 1$ observables. The sample complexity and runtime of the algorithm scale singly-exponentially in the number of observables actually used. Hence, it is possible that, except for a small set of ``adversarial" models, model identification typically be achieved with much lower sample complexity and runtime than can be guaranteed for the worst case. In fact, this line of thought has already been pursued for the tensor decomposition problem to which we reduce $k$-MixProd. It was shown that the time complexity of the tensor decomposition problem significantly improves when each tensor component of the input tensor is perturbed by small random noise~\cite{Bhaskara14}. Note that one can never solve the $k$-MixProd Identification problem  with fewer than $\lg k$ observables, because then the Hadamard extensions cannot be full rank, and then the mapping from model to statistics, cannot be injective.

\bibliographystyle{plainurl}
\bibliography{refs}

\end{document}